\definecolor{Gray}{gray}{0.9} 
\newtheorem{theorem}{Theorem}
\newtheorem*{theorem*}{Theorem}
\newtheorem*{lemma*}{Lemma}
\newtheorem{definition}{Definition}
\newtheorem{proposition}{Proposition}
\newtheorem*{proposition*}{Proposition}
\newtheoremstyle{dotless}{}{}{\itshape}{}{\bfseries}{}{ }{}
\theoremstyle{dotless}
\newtheorem*{statement}{Statement:}
\title{Improve Robustness of Reinforcement Learning against Observation Perturbations via $l_\infty$ Lipschitz Policy Networks}
\title{Improve Robustness of Reinforcement Learning against Observation Perturbations via $l_\infty$ Lipschitz Policy Networks}
\author {
    Buqing Nie,
    Jingtian Ji,
    Yangqing Fu,
    Yue Gao\thanks{Corresponding author.}
}
\begin{document}

\maketitle

\begin{abstract}
Deep Reinforcement Learning (DRL) has achieved remarkable advances in sequential decision tasks.
However, recent works have revealed that DRL agents are susceptible to slight perturbations in observations.
This vulnerability raises concerns regarding the effectiveness and robustness of deploying such agents in real-world applications. 
In this work, we propose a novel robust reinforcement learning method called \emph{SortRL}, which improves the robustness of DRL policies against observation perturbations from the perspective of the network architecture.
We employ a novel architecture for the policy network that incorporates global $l_\infty$ Lipschitz continuity and provide a convenient method to enhance policy robustness based on the output margin.
Besides, a training framework is designed for \emph{SortRL}, which solves given tasks 
while maintaining robustness against $l_\infty$ bounded perturbations on the observations.
Several experiments are conducted to evaluate the effectiveness of our method, including classic control tasks and video games.
The results demonstrate that \emph{SortRL} achieves state-of-the-art robustness performance against different perturbation strength.
\end{abstract}

\section{Introduction}
Recently, Deep Reinforcement Learning (DRL) has achieved breakthrough success in various application scenarios, including video games~\cite{mnih2015human}, recommender systems~\cite{afsar2022reinforcement}, and robotics control~\cite{lee2020learning}.
These achievements typically rely on the Deep Neural Networks (DNNs) as function approximators for their strong expressive power,  which enables the end-to-end learning of policies in complex environments with high-dimension state spaces, such as images observations~\cite{hornik1989multilayer, mnih2015human,DBLP:conf/iclr/KaiserBMOCCEFKL20}.

However, DNNs typically lack robustness due to their highly non-linear and black-box nature, resulting in unreasonable and unpredictable outputs when inputs are perturbed slightly~\cite{madry2018towards,yuan2019adversarial}.
Similarly, recent works have shown that typical DNN-based policies are also vulnerable to imperceptible perturbations on observations, also known as ``state adversaries'', which are prevalent in application scenarios such as sensor noise~\cite{zang2019impact} and adversarial attacks~\cite{DBLP:conf/iclr/HuangPGDA17}.
These slight perturbations can deceive typical DRL policies easily, leading to irrational and unpredictable decisions by the agent~\cite{fischer2019online,zhang2020robust,oikarinen2021robust,DBLP:conf/iclr/ZhangCBH21,sun2022who}.
This may affect the policy effectiveness and user experience, even causing safety issues, especially in safety-critical applications such as autonomous driving and robot manipulation tasks~\cite{zhao2022cadre}.
The lack of robustness to observation perturbations renders applications of DRL unreliable and risky, thereby limiting potential applications in real-world scenarios.

In the recent decade, plenty of works have been proposed to certify and enhance the robustness of DRL policies against perturbations on observations.
Some researchers propose various robust policy regularizers to enforce policy smoothness, i.e. the policy output similar actions given similar observations~\cite{zhang2020robust,shen2020deep,oikarinen2021robust}.
For example, Shen et al.~\cite{shen2020deep} propose a smoothness-inducing regularizer inspired by Lipschitz continuity to encourage the policy function to become smooth, which improves sample efficiency and policy robustness in continuous control tasks.
Despite the excellent performance achieved, the incorporation of a smoothness regularizer may hinder the expressive power of the policy network, resulting in a partial compromise of optimality and performance, especially in tasks with strong perturbation strength~\cite{wu2022robust}.

Another approach to enhancing the policy robustness is based on attacking and adversarial samples~\cite{mandlekar2017adversarially,pattanaik2018robust,DBLP:conf/iclr/ZhangCBH21}.
For instance, Pattanaik et al.~\cite{pattanaik2018robust} improve policy robustness utilizing adversarial observations found by gradient-based attackers.
Recently, Zhang et al.~\cite{DBLP:conf/iclr/ZhangCBH21} propose Alternating Training with Learned Adversaries (ATLA), which trains an RL adversary online with the agent policy alternately.
ALTA significantly improves the policy robustness in continuous control tasks.
Despite the excellent robustness, these methods require training extra attackers or finding adversaries for the observations, which incurs additional computational and sampling costs, thereby limiting their practical applications.

In this work, we propose a novel method called \emph{SortRL} to improve the robustness of DRL policies against observation perturbations from the perspective of the network architecture.
We introduce a new policy network architecture based on an $l_\infty$ Lipschitz Neural Network called \emph{SortNet}.
Besides, we introduce a straightforward and efficient method to estimate the lower bound of policy robustness utilizing the output margin.
Additionally, we design a training framework for \emph{SortRL} based on Policy Distillation~\cite{DBLP:journals/corr/RusuCGDKPMKH15}, which enables the agent to solve the given tasks successfully while addressing robustness requirements against observation perturbations.
Several experiments on classic control tasks and video games are conducted to evaluate the performance of \emph{SortRL}, which demonstrates the state-of-the-art performance of our method.

Our main contributions are listed as follows:
\begin{itemize}
    \item We propose a novel robust reinforcement learning method called \emph{SortRL}, which enhances the policy robustness against observation perturbations.
    To our knowledge, this is the first work to address this issue from the perspective of network architecture.
    \item 
    We employ a novel policy design base on an $l_\infty$ Lipschitz Neural Network.
    A convenient method is provided to evaluate and improve policy robustness based on the output margin.
    \item We design a training framework for \emph{SortRL} to make a trade-off between optimality and robustness, which enables the agent to solve given tasks while addressing robustness requirements.
    \item Experiments on classic control tasks and video games are conducted, which demonstrate that \emph{SortRL} achieves state-of-the-art robustness against different perturbation strength, especially in tasks with strong perturbations.
    
\end{itemize}

\section{Related Work}
\subsection{Robust Reinforcement Learning}
Robust Reinforcement Learning aims to improve the policy robustness against perturbations in the Markov Decision Process (MDP).
Thus, there exist various interpretations of robustness in the RL context, including the robustness against action perturbations~\cite{tessler2019action}, dynamics uncertainty~\cite{pinto2017robust, huang2022robust}, domain shift~\cite{muratore2019assessing, ju2022transferring}, and reward perturbations~\cite{wang2020reinforcement,eysenbach2021maximum}.

This work focuses on the policy robustness against observation perturbations, which has been actively researched recently~\cite{fischer2019online, zhang2020robust, oikarinen2021robust, liang2022efficient}.
Several works improve robustness against observation perturbations utilizing various policy regularizers, which enforce the policy to make similar decisions under similar observations~\cite{zhang2020robust,shen2020deep,oikarinen2021robust}.
For instance, Shen et al.~\cite{shen2020deep} design a policy regularizer for continuous control tasks inspired by the Lipschitz continuity, which improves sample efficiency and robustness to adversarial perturbations.
Some researchers attempt to enforce policy robustness utilizing adversarial samples generated through active attacks~\cite{mandlekar2017adversarially,pattanaik2018robust,DBLP:conf/iclr/ZhangCBH21,liang2022efficient}.
Zhang et al.~\cite{DBLP:conf/iclr/ZhangCBH21} propose ATLA, which improves the policy robustness in continuous control tasks by training the policy with an RL adversary online together.
However, Korkmaz~\cite{korkmaz2021investigating,Korkmaz_2023} points out that adversarially trained DRL policies may still be sensitive to policy-independent perturbations.
Several researchers study the certified robustness of DRL policies~\cite{fischer2019online,everett2021certifiable}.
Some methods such as CROP~\cite{wu2022crop} and Policy Smoothing~\cite{kumar2022policy} are proposed to analyze robustness certificates for trained DRL policies.

Despite the significant achievements, there are still some limitations to be addressed.
For instance, they may suffer from high computational costs~\cite{DBLP:conf/iclr/ZhangCBH21} and struggle to cope with strong perturbations, such as perturbations strength greater than ${5}/{255}$ in video games~\cite{wu2022robust}.
In this work, we propose a new robust RL method called \emph{SortRL}.
To our knowledge, this is the first method to improve the robustness of RL policies against observation perturbations from the perspective of network architecture.

\subsection{Robustness of Neural Networks}
Standard neural networks are vulnerable to small perturbations to the inputs~\cite{DBLP:journals/corr/SzegedyZSBEGF13,madry2018towards}, especially given high dimensional inputs such as images.
In order to improve the robustness of DNN, various methods are proposed, including randomized smoothing~\cite{salman2019provably} and relaxation-based approaches~\cite{gowal2018effectiveness,zhang2020towards}.
Besides, some researchers have found that the Lipschitz continuity is significant to the network robustness~\cite{tsuzuku2018lipschitz,anil2019sorting,li2019preventing}.
Recently, several Lipschitz Neural Networks (LNN) have been proposed to enhance robustness, including Spectral Norm~\cite{gouk2021regularisation}, GroupSort~\cite{anil2019sorting}, and $l_\infty$-distance neuron~\cite{zhang2022boosting, zhang2021towards}.
In this work, we construct the policy network based on an $l_\infty$ $1$-Lipschitz Neural Network called SortNet~\cite{zhang2022rethinking}, which provides Lipschitz property, strong expressive power, and high computation efficiency.

\section{Methodology}

\subsection{Problem Formulation}
To study policy robustness under observation perturbations, we formulate the decision process based on the state-adversarial Markov Decision Process (SA-MDP)~\cite{zhang2020robust}.
In this work,  an SA-MDP $\widetilde{\mathcal{M}}$ is defined as $<\mathcal{S}, \mathcal{A}, {P}, {R}, \gamma, \rho, \nu>$, where $\mathcal{S}$ is the state space, $\mathcal{A}$ denotes the action space, 
$P(s'|s,a)=\Pr(s_{t+1}=s'|s_{t}=s,a_t=a)$ denotes the transition probability,
$R:\mathcal{S}\times \mathcal{A}\times\mathcal{S}\to \mathbb{R}$ denotes the reward function, $\gamma \in [0,1]$ denotes discount factor, and $\rho(s)=\Pr(s_0)$ is the distribution of initial states.
$\pi:\mathcal{S}\to \Pr(\mathcal{A})$ is a stationary policy, which is trained to maximize the cumulative reward.

Different from typical MDP $\mathcal{M}$, there exists an adversary $\nu(s):\mathcal{S}\to \Pr(\mathcal{S})$ in SA-MDP $\widetilde{\mathcal{M}}$, which adds perturbations to the agent's observations.
Each time the agent obtains perturbed observation $\hat{s}\sim \nu(s)$ and makes the decision $a\sim\pi(\cdot|\hat{s})$.
Therefore, the value function of policy $\pi$ under $\nu$ adversary is given as follows:
\begin{equation}
\widetilde{V}_{\pi\circ\nu}(s) = \mathbb{E}_{\hat{s}_t\sim\nu(s_t), a_t\sim\pi(\hat{s}_t)}\left[ 
\sum_{t=0}^{\infty} \gamma^t r_{t+1} | s_0 = s
\right].
\end{equation}
In this work, we focus on RL tasks with discrete action spaces against $l_\infty$ bounded perturbations, i.e. 
$\nu(s)\in \mathcal{B}_{\epsilon}^{\infty}(s)$, where $\mathcal{B}_{\epsilon}^{\infty}(s) = \{\hat{s}| \, \|\hat{s}-s\|_{\infty} \leq \epsilon\}$ denotes the ``neighbors" of the clean state $s$.
The $\epsilon \geq 0$ is an important parameter determining the strength of the adversary.
A larger value of $\epsilon$ indicates a stronger adversary, which in turn requires a higher level of policy robustness.
Thus, the policy $\pi$ can be trained by solving the following optimization problem:
\begin{equation}
\label{eq:optim_problem}
\begin{aligned}
& \max_{\pi} \, \min_{\nu} \, \mathbb{E}_{s\sim \rho} \left[ \widetilde{V}_{\pi\circ\nu}(s) \right] \\
& \begin{array}{r@{\quad}l@{\;}l@{\quad}l}
\text {s.t.}
& \|\hat{s}-s\|_{\infty} \leq \epsilon, & \forall s\in \mathcal{S}, \, \hat{s}\sim\nu(s).\\
\end{array}
\end{aligned}
\end{equation}

\subsection{Problem Transformation}
We are required to solve a minimax optimization problem as described in Eq.~\eqref{eq:optim_problem}.
However, finding the optimal adversary $\nu^*(s) = \arg\min_{\nu} \widetilde{V}_{\pi\circ\nu}(s)$ for each state $s_t$ is NP-hard, which is computationally and sample expensive~\cite{oikarinen2021robust}.
To address this issue, we try to reformulate the problem in this section.

\begin{theorem}
\label{thm:v_to_pi}
Given a typical MDP $\mathcal{M}$, corresponding SA-MDP $\widetilde{\mathcal{M}}$ with an adversary $\nu(s)\in \mathcal{B}_{\epsilon}^{\infty}(s)$, and a policy $\pi$, 
$V_{\pi}(s)$ and $\widetilde{V}_{\pi\circ\nu}(s)$ denote the value functions in $\mathcal{M}$ and $\widetilde{\mathcal{M}}$ accordingly. 
We have:
\begin{equation}
\label{eq:th_v_to_pi}
\begin{aligned}
    & \max_{s\in\mathcal{S}} \{ V_{\pi}(s) - \min_{\nu} \widetilde{V}_{\pi\circ\nu}(s) \} \\
    & \leq \alpha \max_{s\in\mathcal{S}} \max_{\nu} \sqrt{ D_{\operatorname{KL}}(\pi(s), \pi(\hat{s}))},\\
\end{aligned}
\end{equation}
where $\alpha= \sqrt{2} \left[1+\frac{\gamma}{\left(1-\gamma\right)^2}\right] \max_{(s,a,s')}|R(s,a,s')|$ is a constant independent of the policy, $\hat{s}\sim\nu(s)$ denotes perturbed observation, and $D_{\operatorname{KL}}(\cdot, \cdot)$ denotes KL-divergence.
\end{theorem}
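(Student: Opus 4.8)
The plan is to convert the left-hand side into a gap between two \emph{ordinary} policies on the clean MDP $\mathcal{M}$, control that gap by a total-variation distance between $\pi$ at clean and perturbed states, and finish with Pinsker's inequality. First, for a fixed stationary adversary $\nu$, I would introduce the composed policy $\pi\circ\nu$ defined by $(\pi\circ\nu)(a\mid s)=\mathbb{E}_{\hat s\sim\nu(s)}[\pi(a\mid\hat s)]$. Since in $\widetilde{\mathcal{M}}$ the transition and reward are still driven by the true state $s_t$ and the executed action $a_t$, the agent's trajectory law under $\nu$ is exactly that of the stationary policy $\pi\circ\nu$ run in $\mathcal{M}$; hence $\widetilde V_{\pi\circ\nu}(s)=V_{\pi\circ\nu}(s)$ for every $s$. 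Because $\nu$ does not depend on the state at which we evaluate, we can interchange the two extremal operators, $\max_{s}\{V_{\pi}(s)-\min_{\nu}\widetilde V_{\pi\circ\nu}(s)\}=\max_{\nu}\max_{s}\{V_{\pi}(s)-V_{\pi\circ\nu}(s)\}$, so it suffices to bound $V_{\pi}(s)-V_{\pi\circ\nu}(s)$ uniformly in $s$ and $\nu$ by the claimed right-hand side.

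The core estimate is a value-gap bound for two policies $\pi_1,\pi_2$ on the same MDP in terms of $\max_{s}D_{\mathrm{TV}}(\pi_1(\cdot\mid s),\pi_2(\cdot\mid s))$ (total-variation distance); this is the total-variation counterpart of the claimed inequality and is precisely the SA-MDP value-perturbation bound of \cite{zhang2020robust}. I would obtain it by a coupling/telescoping argument: run $\pi_1$ and $\pi_2$ from a common start state under a maximal coupling of their per-step action distributions together with a coupling of the transitions, and let $\tau$ be the first step at which they choose different actions. The two trajectories agree up to time $\tau$, so all rewards earned before that point coincide and $|V_{\pi_1}(s)-V_{\pi_2}(s)|\le\mathbb{E}[\sum_{t\ge\tau}\gamma^{t}|r^{1}_{t+1}-r^{2}_{t+1}|]$. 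Bounding the post-divergence reward gap by $2\max_{(s,a,s')}|R(s,a,s')|$ and the per-step divergence probability by $\max_s D_{\mathrm{TV}}(\pi_1(\cdot\mid s),\pi_2(\cdot\mid s))$, a geometric estimate of $\mathbb{E}[\gamma^{\tau}]$ yields the bound with constant $2[1+\gamma/(1-\gamma)^2]\max_{(s,a,s')}|R(s,a,s')|$. The delicate point here is tracking the discount powers carefully enough to land on this constant rather than the cruder $2\max|R|/(1-\gamma)^2$, and I expect this bookkeeping to be the main obstacle (or, equivalently, one simply invokes the SA-MDP bound of \cite{zhang2020robust} directly).

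It remains to match the two sides' quantities. Since $(\pi\circ\nu)(\cdot\mid s)$ is a mixture of the distributions $\pi(\cdot\mid\hat s)$ over $\hat s\in\mathcal{B}^{\infty}_{\epsilon}(s)$ and $D_{\mathrm{TV}}$ is jointly convex, $D_{\mathrm{TV}}(\pi(\cdot\mid s),(\pi\circ\nu)(\cdot\mid s))\le\mathbb{E}_{\hat s\sim\nu(s)}D_{\mathrm{TV}}(\pi(\cdot\mid s),\pi(\cdot\mid\hat s))\le\max_{\hat s\in\mathcal{B}^{\infty}_{\epsilon}(s)}D_{\mathrm{TV}}(\pi(\cdot\mid s),\pi(\cdot\mid\hat s))$, which coincides with the ``$\max_\nu$'' quantity in the statement. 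Plugging this into the value-gap bound and applying Pinsker's inequality $D_{\mathrm{TV}}(P,Q)\le\sqrt{\tfrac12 D_{\operatorname{KL}}(P,Q)}$ termwise replaces $D_{\mathrm{TV}}$ by $\tfrac{1}{\sqrt2}\sqrt{D_{\operatorname{KL}}}$, and absorbing $\tfrac{1}{\sqrt2}\cdot 2=\sqrt2$ into the constant turns $2[1+\gamma/(1-\gamma)^2]\max|R|$ into exactly $\alpha=\sqrt2\,[1+\gamma/(1-\gamma)^2]\max_{(s,a,s')}|R(s,a,s')|$, which is the claimed inequality. The convexity reduction and the Pinsker step are both routine; as noted, the only genuinely technical ingredient is the value-gap estimate of the second step.
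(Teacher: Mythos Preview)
Your overall architecture is sound and matches the paper's in spirit: reduce to a value gap between two stationary policies on the clean MDP, bound that gap by a sup of total-variation distances, and finish with Pinsker. Where you diverge is the middle step. The paper does not use a coupling argument; it invokes the CPO performance-difference bound of \cite{achiam2017constrained},
\[
V_{\pi}(s_0)-V_{\pi'}(s_0)\;\le\;\frac{2\gamma}{(1-\gamma)^2}\max_s\bigl|\mathbb{E}_{a\sim\pi',s'\sim P}[R]\bigr|\,\mathbb{E}_{s\sim d_{s_0}^{\pi}}\!\bigl[D_{\mathrm{TV}}(\pi(s),\pi'(s))\bigr]\;-\;\frac{1}{1-\gamma}\,\mathbb{E}\!\left[\Bigl(\tfrac{\pi'(a\mid s)}{\pi(a\mid s)}-1\Bigr)R\right],
\]
bounds each term separately by $\max|R|\cdot\max_s D_{\mathrm{TV}}$, and then applies Pinsker. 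The two CPO terms are precisely what produce the split $\gamma/(1-\gamma)^2$ and $1$ inside the constant $\alpha$.

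Your coupling sketch, as written, will not land on that constant. Under a maximal per-step coupling with divergence time $\tau$, one has $\mathbb{P}(\tau=t)\le\delta:=\max_s D_{\mathrm{TV}}$ and hence $\mathbb{E}[\gamma^{\tau}]\le\delta/(1-\gamma)$; bounding the post-divergence rewards by $2\max|R|/(1-\gamma)$ then gives only the cruder $\tfrac{2\max|R|}{(1-\gamma)^2}\,\delta$, which is strictly larger than $2\bigl[1+\gamma/(1-\gamma)^2\bigr]\max|R|\,\delta$ for $\gamma\in(0,1)$. No amount of discount bookkeeping in the coupling picture recovers the sharper split, because the CPO bound separates the ``immediate importance-weight'' contribution from the ``state-distribution shift'' contribution, and it is this separation that yields the $1+\gamma/(1-\gamma)^2$ structure. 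So your fallback of invoking the SA-MDP bound of \cite{zhang2020robust} (which itself is proved via CPO) is not optional if you want the stated $\alpha$; it is exactly what the paper does. On the other hand, your convexity step for stochastic adversaries, reducing $D_{\mathrm{TV}}\bigl(\pi(\cdot\mid s),(\pi\circ\nu)(\cdot\mid s)\bigr)$ to $\max_{\hat s\in\mathcal{B}^{\infty}_{\epsilon}(s)}D_{\mathrm{TV}}(\pi(\cdot\mid s),\pi(\cdot\mid\hat s))$, is more careful than the paper's presentation, which simply substitutes $\pi'(s)=\pi(\nu^*(s))$ for a deterministic optimal adversary.
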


The proof is given in Appendix A.1 according to \cite{achiam2017constrained} and \cite{zhang2020robust}.
Theorem~\ref{thm:v_to_pi} indicates that the performance loss of the policy $\pi$ under the optimal adversary $\nu^*$ is bounded by the KL divergence between the action distributions.
Therefore, in order to minimize the performance loss of $\pi$ against the observation adversary, we can minimize the $D_{\operatorname{KL}}$ illustrated in Eq.~\eqref{eq:th_v_to_pi} during training.
One possible approach is constructing policy regularizers based on $D_{\operatorname{KL}}$, such as 
$\mathcal{L}_{\operatorname{KL}} = \mathbb{E}_{s}\left[ \max_{\nu} D_{\operatorname{KL}}\left(\pi\left(s\right), \pi\left(\hat{s}\right)\right) \right],$
which is minimized during training the policy.
However, finding the adversary 
$\arg\max_{\nu}D_{\operatorname{KL}}\left(\pi\left(s\right), \pi\left(\hat{s}\right)\right)$ 
for each state $s$ is still computationally expensive.
Besides, policy regularization may hinder the expressive power of the policy network, resulting in the sacrifice of optimality and performance.
In order to address these issues, we introduce the robust radius of policies and incorporate the Lipschitz continuity into the policy network.

\begin{definition}
\label{def:robust_radius}
    (Robust radius of policies)
    Given a stationary policy $\pi$, the robust radius of $\pi$ at state $s$ is defined as the radius of the largest $l_\infty$ ball centered at $s$, in which $\pi$ does not change its decision.
    The formulation is shown as follows:
    \begin{equation}
    \label{eq:robust_radius_def}
        \mathcal{R}(\pi, s) = \inf_{\substack{\pi(s^\prime)\neq\pi(s), s^\prime \in \mathcal{S}}} \|s^\prime-s\|_\infty.
    \end{equation}
\end{definition}

As described in Definition~\ref{def:robust_radius}, the robust radius of policy $\pi$ is designed to evaluate policy robustness against observation perturbations quantitatively.
We can obtain the following formulation based on Theorem~\ref{thm:v_to_pi}:
\begin{equation}
\label{eq:radius_to_no_v_gap}
\begin{aligned}
    \forall s, \, \mathcal{R}(\pi,s) \geq \epsilon \implies 
     \forall s, \, \min_{\nu} \widetilde{V}_{\pi\circ\nu}(s) \geq V_{\pi}(s), \\
\end{aligned}
\end{equation}
which can be proved utilizing Eq.~\eqref{eq:th_v_to_pi} and Eq.~\eqref{eq:robust_radius_def}.
The detailed proof is given in Appendix A.2.
The Eq.~\eqref{eq:radius_to_no_v_gap} implies that, the policy $\pi$ can resist all attacks from $\nu$ without any degradation in performance when the robust radius is big enough.
Therefore, the original problem Eq.~\eqref{eq:optim_problem} can be reformulated as the following equation:
\begin{equation}
\label{eq:optim_proble_radius}
\begin{aligned}
& \max_{\pi} \, \mathbb{E}_{s\sim \rho} \left[ V_{\pi}(s) \right] \\
& \begin{array}{r@{\quad}l@{\;}l@{\quad}l}
\text {s.t.} 
& \mathcal{R}(\pi,s)\geq \epsilon, & \forall s\in \mathcal{S}.\\
\end{array}
\end{aligned}
\end{equation}
Note that solving problem Eq.~\eqref{eq:optim_proble_radius} removes the requirement of finding the  optimal adversary $\nu^*$ compared to Eq.~\eqref{eq:optim_problem}.

\subsection{SortRL Policy Networks}
The problem described in Eq.~\eqref{eq:optim_proble_radius} involves computing the robust radius of policy $\pi$ accurately. 
However, this task is particularly challenging for typical DNN-based policies due to the high computational cost~\cite{zhai2019macer,zhang2021towards}.
In this section, we design a novel policy network utilizing the architecture called SortNet~\cite{zhang2022rethinking} to address this issue with the Lipschitz property.

We utilize a function $g^\pi:\mathcal{S} \to \mathbb{R}^{| \mathcal{A} |}$ to evaluate the score of each action $a\in \mathcal{A}$ based on the perturbed state $\hat{s}$ obtained by the agent.
$g^\pi$ is composed of $M$-layer fully-connected SortNet~\cite{zhang2022rethinking}.
Given a perturbed state $s$, $\boldsymbol{x}^{(0)} = s$ denotes the input of $g^\pi$, and $\boldsymbol{x}^{(l)}_{k}$ denotes the $k$-th unit in the $l$-th layer, which can be computed through the following formulations:
\begin{equation}
\label{eq:sortnet_layer}
\begin{aligned}
    &x_k^{(l)}=\left(\boldsymbol{w}^{(l, k)}\right)^{\mathrm{T}} \operatorname{sort}\left(\left|\boldsymbol{x}^{(l-1)}+\boldsymbol{b}^{(l, k)}\right|\right), \\
    & \omega^{(l,k)}_i = (1-\rho)\rho^{i-1}, \, 1\leq l\leq M, \, 1\leq k\leq d_l, \\
\end{aligned}
\end{equation}
where $d_l$ is the size of $l$-th network layer, $\rho \in [0,1)$ is a hyper-parameter.
$\operatorname{sort}(\boldsymbol{x}) \coloneqq \left[x_{[1]}, \cdots, x_{[d]}\right]^{\mathrm{T}}$, where $x_{[k]}$ is the $k$-th largest element of $\boldsymbol{x}\in \mathbb{R}^d$.
The final output $g^\pi(s) =  - \left( \boldsymbol{x}^{(M)} + \boldsymbol{b}^{\operatorname{out}}  \right)$.
Afterward, the agent takes the best  action with the highest score:
\begin{equation}
\label{eq:pi_def}
    \pi(a|s) \coloneqq \mathds{1}\left( a = \arg\max_{a_i} g^\pi_i(s) \right),
\end{equation} 
where $\mathds{1}(\cdot)$ denotes the indicator function.
The $\big\{ \boldsymbol{b}^{(l, k)} \big\}$ and $\boldsymbol{b}^{\operatorname{out}}$ are network parameters which need to be optimized during training.

\begin{definition}
    (Lipschitz Continuity) Given a function $f:\mathbb{R}^n\to\mathbb{R}^m$, if $\, \exists K>0$, such that
    \begin{equation}
    \label{eq:lipschitz_def}
        \| f(x_1) - f(x_2) \|_p \leq K \| x_1 - x_2 \|_p, \, \forall x_1, x_2\in \mathbb{R}^n,
    \end{equation}
    then $f$ is called $K$-Lipschitz continuous with respect to $l_{p}$ norm, where $K$ is the Lipschitz constant. 
    Similarly,  a neural network $f:\mathbb{R}^n\to\mathbb{R}^m$ is called $l_\infty$ $1$-Lipschitz Neural Network (LNN) if Eq.~\eqref{eq:lipschitz_def} holds with $p=+\infty$ and $K=1$.
\end{definition}

\begin{proposition}
\label{prop:sortnet_lipschitz}
The score function $g^\pi(s)$ is $1$-Lipschitz continuous with respect to $l_\infty$ norm, i.e. 
\begin{equation}
     \left\| g^\pi(s_1) - g^\pi(s_2) \right\|_{\infty} \leq \left\| s_1 - s_2 \right\|_{\infty}, \, \forall s_1, s_2\in \mathcal{S}.
\end{equation}
\end{proposition}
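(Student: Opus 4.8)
The strategy is to show that each layer of the SortNet is $1$-Lipschitz with respect to the $l_\infty$ norm, and then compose. Since $g^\pi(s) = -(\boldsymbol{x}^{(M)} + \boldsymbol{b}^{\operatorname{out}})$, the additive constant $\boldsymbol{b}^{\operatorname{out}}$ and the global sign do not affect Lipschitz constants, so it suffices to prove that the map $\boldsymbol{x}^{(l-1)} \mapsto \boldsymbol{x}^{(l)}$ defined by Eq.~\eqref{eq:sortnet_layer} is $1$-Lipschitz in $l_\infty$ for each $l$, and then chain these bounds together with $\|g^\pi(s_1)-g^\pi(s_2)\|_\infty = \|\boldsymbol{x}^{(M)}_1 - \boldsymbol{x}^{(M)}_2\|_\infty \leq \|\boldsymbol{x}^{(0)}_1 - \boldsymbol{x}^{(0)}_2\|_\infty = \|s_1 - s_2\|_\infty$.

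For a single layer, fix two inputs $\boldsymbol{u} = \boldsymbol{x}^{(l-1)}$ and $\boldsymbol{v}$, and look at output coordinate $k$: $x_k^{(l)} = (\boldsymbol{w}^{(l,k)})^{\mathrm{T}}\operatorname{sort}(|\boldsymbol{u}+\boldsymbol{b}^{(l,k)}|)$. First I would record two elementary facts: (i) the map $\boldsymbol{z}\mapsto \operatorname{sort}(|\boldsymbol{z}|)$ is $1$-Lipschitz in $l_\infty$, i.e.\ $\|\operatorname{sort}(|\boldsymbol{z}_1|)-\operatorname{sort}(|\boldsymbol{z}_2|)\|_\infty \le \|\boldsymbol{z}_1-\boldsymbol{z}_2\|_\infty$ — this follows because taking absolute values is $1$-Lipschitz coordinatewise, and sorting a vector is $1$-Lipschitz in every $l_p$ norm (a standard rearrangement fact: the $j$-th order statistic changes by at most the $l_\infty$ perturbation of the vector); applying the additive shift $\boldsymbol{b}^{(l,k)}$ does not change differences. (ii) The weight vector satisfies $\|\boldsymbol{w}^{(l,k)}\|_1 = \sum_{i\ge 1}(1-\rho)\rho^{i-1} = 1$ (a geometric series), and all entries are nonnegative. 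Then for each coordinate $k$,
\begin{equation}
\bigl| x_k^{(l)}(\boldsymbol{u}) - x_k^{(l)}(\boldsymbol{v}) \bigr|
= \bigl| (\boldsymbol{w}^{(l,k)})^{\mathrm{T}}\bigl(\operatorname{sort}(|\boldsymbol{u}+\boldsymbol{b}^{(l,k)}|) - \operatorname{sort}(|\boldsymbol{v}+\boldsymbol{b}^{(l,k)}|)\bigr)\bigr|
\le \|\boldsymbol{w}^{(l,k)}\|_1 \, \bigl\|\operatorname{sort}(|\boldsymbol{u}+\boldsymbol{b}^{(l,k)}|) - \operatorname{sort}(|\boldsymbol{v}+\boldsymbol{b}^{(l,k)}|)\bigr\|_\infty
\le \|\boldsymbol{u}-\boldsymbol{v}\|_\infty,
\end{equation}
using Hölder's inequality in the first bound and facts (i)–(ii) in the second. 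Taking the max over $k$ gives $\|\boldsymbol{x}^{(l)}(\boldsymbol{u}) - \boldsymbol{x}^{(l)}(\boldsymbol{v})\|_\infty \le \|\boldsymbol{u}-\boldsymbol{v}\|_\infty$, which is the per-layer claim.

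The main obstacle — and the only step that needs genuine care rather than bookkeeping — is fact (i), specifically that the sorting operator is nonexpansive in $l_\infty$. The cleanest argument is: for the $j$-th largest order statistic, $z_{[j]} = \min_{|S|=j}\max_{i\in S} z_i$ (or the dual max-min form), and for any fixed index set $S$ the function $\boldsymbol{z}\mapsto \max_{i\in S} z_i$ is $1$-Lipschitz in $l_\infty$; since a min (resp.\ max) over a finite family of $1$-Lipschitz functions is again $1$-Lipschitz, $z_{[j]}$ is $1$-Lipschitz in $l_\infty$, uniformly in $j$, hence so is the whole sorted vector. Combined with the coordinatewise $1$-Lipschitzness of $t\mapsto|t|$, this yields (i). Everything else is Hölder plus the geometric-series normalization $\|\boldsymbol{w}^{(l,k)}\|_1 = 1$, and the composition telescopes immediately. $\qed$
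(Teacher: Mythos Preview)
Your proof is correct and follows the same overall strategy as the paper: show each SortNet layer is $1$-Lipschitz in $l_\infty$ by decomposing it into the additive shift, the coordinatewise absolute value, the sort, and the weighted sum with $\|\boldsymbol{w}^{(l,k)}\|_1 \le 1$, then compose across layers. The one substantive difference is how you handle the sort operator. The paper writes $\operatorname{sort}$ as a composition of pairwise compare-and-swap maps $\psi^{(i,j)}$ (bubble sort) and verifies that each swap is $1$-Lipschitz in $l_\infty$ via a four-case check on the relative orderings of the two coordinates being compared. Your route through the min--max representation of order statistics, together with the closure of $1$-Lipschitz functions under pointwise min and max, is shorter and sidesteps that casework entirely; it also makes the $l_\infty$ nonexpansiveness of every order statistic immediate and uniform in $j$. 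Two cosmetic remarks: the finite geometric sum gives $\|\boldsymbol{w}^{(l,k)}\|_1 = 1-\rho^{d_{l-1}} \le 1$ rather than exactly $1$ (which is all you need), and the formula $z_{[j]} = \min_{|S|=j}\max_{i\in S} z_i$ as written returns the $j$-th \emph{smallest} element --- the $j$-th largest is $\max_{|S|=j}\min_{i\in S} z_i$, the dual form you mention --- but either identity yields the Lipschitz bound, so the argument is unaffected.
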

The detailed proof is given in Appendix A.3.

\begin{theorem}
\label{thm:policy_robust_radius}
Given a SortRL policy $\pi$ described in Eq.~\eqref{eq:pi_def}, the lower bound of the robust radius for $\pi$ can be expressed as follows:
\begin{equation}
    \mathcal{R}(\pi, s)\geq \frac{1}{2}\operatorname{margin}(g^\pi, s) , \, \forall s\in \mathcal{S},
\end{equation}
where $\operatorname{margin}(g^\pi, s)$ denotes the difference between the largest and second-largest action scores output by $g^\pi$ at state $s$.
\end{theorem}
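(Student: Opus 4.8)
The plan is to argue directly from Definition~\ref{def:robust_radius} together with the $l_\infty$ $1$-Lipschitz property of $g^\pi$ established in Proposition~\ref{prop:sortnet_lipschitz}. Write $i^\star = \arg\max_{a_i} g^\pi_i(s)$ for the action selected at the clean state $s$, so that by the description of the margin $\operatorname{margin}(g^\pi,s) = g^\pi_{i^\star}(s) - \max_{j\neq i^\star} g^\pi_j(s)$. It suffices to show that every $s'$ at which the decision flips, i.e. with $\pi(s')\neq\pi(s)$, must satisfy $\|s'-s\|_\infty \geq \tfrac12\operatorname{margin}(g^\pi,s)$; taking the infimum over all such $s'$ then gives exactly the claimed lower bound on $\mathcal{R}(\pi,s)$.

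First I would record the coordinatewise consequence of Proposition~\ref{prop:sortnet_lipschitz}: for every action index $k$ and all $s_1,s_2\in\mathcal{S}$, $|g^\pi_k(s_1) - g^\pi_k(s_2)| \leq \|g^\pi(s_1)-g^\pi(s_2)\|_\infty \leq \|s_1-s_2\|_\infty$, so each score coordinate $g^\pi_k$ is itself $1$-Lipschitz in $l_\infty$. Next, assume $\pi(s')\neq\pi(s)$. By the definition of $\pi$ in Eq.~\eqref{eq:pi_def}, the action chosen at $s'$ is some $j\neq i^\star$ that maximizes $g^\pi_\cdot(s')$, and in particular $g^\pi_j(s')\geq g^\pi_{i^\star}(s')$.

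Then I would chain the two Lipschitz estimates evaluated at $s$ and $s'$: $g^\pi_{i^\star}(s') \geq g^\pi_{i^\star}(s) - \|s'-s\|_\infty$ and $g^\pi_j(s') \leq g^\pi_j(s) + \|s'-s\|_\infty$. Combining these with $g^\pi_j(s')\geq g^\pi_{i^\star}(s')$ yields $2\|s'-s\|_\infty \geq g^\pi_{i^\star}(s) - g^\pi_j(s) \geq g^\pi_{i^\star}(s) - \max_{j'\neq i^\star} g^\pi_{j'}(s) = \operatorname{margin}(g^\pi,s)$, hence $\|s'-s\|_\infty\geq \tfrac12\operatorname{margin}(g^\pi,s)$. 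Passing to the infimum over all $s'$ with $\pi(s')\neq\pi(s)$ completes the argument.

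The only delicate point is the handling of ties in the $\arg\max$ defining $\pi$, which make Eq.~\eqref{eq:pi_def} set-valued: I would read ``$\pi(s')\neq\pi(s)$'' as ``the selected action differs'' and note that a tie at $s$ forces $\operatorname{margin}(g^\pi,s)=0$, so the bound is vacuous there, while a tie at $s'$ still supplies an index $j\neq i^\star$ with $g^\pi_j(s')\geq g^\pi_{i^\star}(s')$, which is all the chain of inequalities uses. Beyond stating this convention cleanly, the proof is a two-line manipulation of the Lipschitz bounds, so I do not expect any substantive obstacle.
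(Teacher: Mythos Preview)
Your proof is correct and mirrors the paper's own argument in Appendix~A.4: both use the coordinatewise $1$-Lipschitz bound on $g^\pi_{i^\star}$ and on the competing score, then combine the two inequalities to relate $\|s'-s\|_\infty$ and $\operatorname{margin}(g^\pi,s)$. The only cosmetic difference is that the paper argues directly (if $\|\hat s-s\|_\infty\le\tfrac12\operatorname{margin}$ then the decision is unchanged) whereas you argue contrapositively; your explicit handling of ties is an addition the paper omits.
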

The proof of Theorem~\ref{thm:policy_robust_radius} is given in Appendix A.4.
This theorem indicates that, $\forall s\in\mathcal{S}$, if $\operatorname{margin}(g^\pi, s) \geq 2\epsilon$, we can obtain that $\pi(s)=\pi(\hat{s}), \, \forall \hat{s}\sim \nu(s)$, i.e. the SortRL $\pi$ can resist attacks from any adversary $\nu\in \mathcal{B}_{\epsilon}^{\infty}(s)$.
Therefore, the optimization problem described in Eq.~\eqref{eq:optim_proble_radius} can be transformed as follows:
\begin{equation}
\label{eq:optim_problem_margin}
\begin{aligned}
& \max_{\pi} \, \mathbb{E}_{s\sim \rho} \left[ V_{\pi}(s) \right] \\
& \begin{array}{r@{\quad}l@{\;}l@{\quad}l}
\text {s.t.} 
& \displaystyle \pi(a|s) = \mathds{1}\big( a = \arg\max_{a_i\in\mathcal{A}} g^\pi_i(s) \big), &\\
& \displaystyle \operatorname{margin}(g^\pi,s)\geq 2\epsilon, \, \forall s\in \mathcal{S}.&\\
\end{array}
\end{aligned}
\end{equation}

Fortunately, the margin defined in Theorem~\ref{thm:policy_robust_radius} is easy to calculate and can be directly obtained from the network output.
Thus, it is practical to improve the robustness of policy $\pi$ against observation perturbations by optimizing the margin of $g^\pi$.

\subsection{SortRL Training Framework}
\label{sec:training_framework}
In this section, we design a training framework for the policy network $g^\pi$ to solve the problem illustrated in Eq.~\eqref{eq:optim_problem_margin}.
Different from typical DNNs, the output of each layer in $g^\pi$ is biased (always being non-negative) under random initialization.
The biases of each layer are accumulated, leading to unstable or ineffective outputs of the network, which need to be removed with per-layer normalization, i.e.
$\boldsymbol{x}^{(l)} \leftarrow \boldsymbol{x}^{(l)} - \mathbb{E}\left[\boldsymbol{x}^{(l)}\right]$.
The estimation of $\mathbb{E}\left[\boldsymbol{x}^{(l)}\right]$ is inaccessible due to the distribution drift of input observations  during the training of typical DRL algorithms.
More details are given in Appendix B.2.

To address this issue, we introduce a new training pipeline for $g^\pi$ based on Policy Distillation (PD)~\cite{DBLP:journals/corr/RusuCGDKPMKH15}.
Firstly, given a task modeled as $\widetilde{\mathcal{M}}$, a DNN-based teacher policy $\pi_T$ is trained in the typical MDP $\mathcal{M}$ utilizing arbitrary DRL algorithms, i.e. $\pi_T \leftarrow \arg\max_{\pi} \mathbb{E}_{s\sim \rho} \left[ V_{\pi}(s) \right]$.
An expert dataset $\mathcal{D} \coloneqq \left\{ (s, a^*) \right\}$ is constructed through interaction between the teacher policy $\pi_T$ and the clean environment without adversary, where $s$ and $a^*$ denote the clean states and teacher actions correspondingly, i.e. $a^* = \arg\max_a \pi_T(a|s)$.

Afterward, a SortRL policy $\pi_S$ is constructed as the student policy, which is trained to mimic the decisions of the teacher policy $\pi_T$, while maintaining robustness against perturbations.
In this work, the $\pi_S$ is trained by minimizing the following loss function on the expert dataset $\mathcal{D}$: 
\begin{equation}
\label{eq:pi_loss}
\begin{aligned}
    \mathcal{L}_{\pi_S} = \, &\lambda \, \mathbb{E}_{(s,a^*)\sim \mathcal{D}}\big[\mathcal{L}_{\operatorname{CE}}\left(g^\pi(s), a^*\right)\big] \\
     & + \mathbb{E}_{(s,a^*)\sim \mathcal{D}}\big[ 
    \mathcal{L}_{\operatorname{Rob}}\left(g^\pi(s), \theta, a^*\right)\big], \\
\end{aligned}
\end{equation}
where $\lambda\in \mathbb{R}$ is a hyper-parameter.
As described in Eq.~\eqref{eq:pi_loss}, the $\mathcal{L}_{\operatorname{CE}}(\cdot,\cdot)$ denotes the cross-entropy loss, which is utilized to improve the performance of $\pi_S$ in the typical MDP $\mathcal{M}$ by mimicking the behaviors of the teacher policy $\pi_T$.
The formulation of $\mathcal{L}_{\operatorname{CE}}(\cdot,\cdot)$ is described as follows:
\begin{equation}
\label{eq:CE_loss}
    \mathcal{L}_{\operatorname{CE}}(\boldsymbol{z}, a^*) = \log\left( \sum_i \mathrm{e}^{z_i} \right) - z_{a^*},
\end{equation}
where $\boldsymbol{z}=g^\pi(s)$ denotes the action logits without SoftMax normalization. 
The $\mathcal{L}_{\operatorname{Rob}}$ utilized in Eq.~\eqref{eq:pi_loss} denotes robustness loss designed based on the Hinge loss.
The formulation of $\mathcal{L}_{\operatorname{Rob}}$ is given as follows:
\begin{equation}
\label{eq:rob_loss}
\mathcal{L}_{\operatorname{Rob}}(\boldsymbol{z}, \theta, y) = 
\begin{cases}
0, \quad \displaystyle z_y < \max_i z_i \; \text{or} \; z_y - \max_{i\neq y} z_i > \theta, \\
\displaystyle\max_{i\neq y} z_i  - z_y, \quad \text{Otherwise}, \\
\end{cases}
\end{equation} 
where $\theta\in \mathbb{R}^{+}$ is the hinge threshold hyper-parameter. 
Decisions made by $\pi_S$  with margins exceeding $\theta$, or deviating from $\pi_T$, are excluded from the robustness training.
As shown in Eq.~\eqref{eq:pi_loss}, the $\mathcal{L}_{\operatorname{Rob}}(\boldsymbol{z}, \theta, y)$ is utilized to improve policy robustness by optimizing the $\operatorname{margin}(g^{\pi}, s)$ to satisfy the requirement described in Eq.~\eqref{eq:optim_proble_radius} and Eq.~\eqref{eq:optim_problem_margin}, i.e. $\mathcal{R}(\pi,s) \geq \frac{1}{2}\operatorname{margin}(\pi, s)\geq \epsilon$.

The parameter $\lambda$ balances between $\mathcal{L}_{\operatorname{CE}}$ and $\mathcal{L}_{\operatorname{Rob}}$, corresponding to the trade-off between optimality (nominal performance of $\pi_S$ in typical $\mathcal{M}$) and robustness (performance against observation perturbations in $\widetilde{\mathcal{M}}$).
During the training process, the value of $\lambda$ is slowly decayed to achieve optimal performance. 
Initially, we mainly focus on minimizing $\mathcal{L}_{\operatorname{CE}}$ of $\pi_{S}$ to learn the decision-making process of $\pi_T$.
In the later stages, a smaller value of $\lambda$ is used to prioritize policy robustness against observation perturbations.
More details including the pseudocode are given in Appendix B.

\section{Experiment}
In this section, to evaluate the performance of our method compared to the existing methods, we conduct experiments on the following three tasks:
\begin{enumerate}[a)]
    \item \textbf{Classic Control:} 
    Experiments on four classic control tasks~\cite{brockman2016openai} are conducted under different perturbation strength, which aim to demonstrate that SortRL improves the robustness of typical DRL policies.
    \item \textbf{Video Games:} Afterward, we compare SortRL with existing robust RL methods on six video games against adversarial perturbations with $0\leq\epsilon\leq\frac{5}{255}$.
    The purpose is to evaluate the robustness of each method against perturbations on high-dimension observations.
    \item \textbf{Video Games with Stronger Adversaries:} In order to evaluate the performance of our method against stronger perturbations, we conduct experiments on video games under adversaries with large strength $\epsilon>\frac{5}{255}$, which is quite challenging and rarely studied in previous works~\cite{wu2022robust}.
    
\end{enumerate}

In this work, all SortRL policies are trained with \emph{AdamW} optimizer~\cite{loshchilov2018decoupled} on a single NVIDIA RTX 3090 GPU.

\subsection{Classic Control}
\subsubsection{Experimental Settings.}
In this experiment, four environments are utilized, including \emph{CartPole}, \emph{Acrobot}, \emph{MountainCar}, and \emph{LunarLander}.
The policy trained by PPO~\cite{schulman2017proximal} algorithm is utilized as the teacher $\pi_T$.
The dataset $\mathcal{D}$ is constructed utilizing $\pi_{T}$ with 50K states and corresponding teacher actions.
The Projected Gradient Descent (PGD)~\cite{madry2018towards} attacker is applied as the adversary $\nu$ in this experiment.
In each step, the observation is perturbed with untargeted $l_\infty$ PGD attacks with 10 steps.
Each method is evaluated with different perturbation strength $\epsilon \in [0.0, 0.2]$, and the episode rewards are recorded to evaluate robustness.

\subsubsection{Results and Analysis.}
The experiment results are given in Fig.~\ref{fig:classic_control_res}, where $x$-axis denotes $\epsilon$ value and $y$-axis denotes episode rewards under perturbations.
The mean episode rewards and standard errors are given at $\epsilon$ intervals of $0.02$, corresponding to curves and shades respectively.

As shown in Fig.~\ref{fig:classic_control_res}, our method SortRL (orange) outperforms PPO (blue) with higher rewards generally, especially on tasks with large perturbation strength $\epsilon>0.1$.
The episode rewards of both methods decrease as $\epsilon$ increases, but SortRL decays much slower than PPO expert, which demonstrates better robustness of our method.
Besides, in some nominal tasks ($\epsilon=0.0$), there exists a small performance loss of our method compared to PPO, such as \emph{MountainCar} and \emph{LunarLander}.
This performance gap is reported and discussed in the previous studies~\cite{liang2022efficient}.
One possible explanation is that the robustness loss $\mathcal{L}_{\operatorname{Rob}}$ encourages the policy to become smoother and may harm the expressive power to some extent, which is necessary and crucial for nominal performance.

\begin{figure}[t]
\centering
\hspace{-4.2mm}
\subfigure[\emph{CartPole}]{
\includegraphics[width=0.495\linewidth]{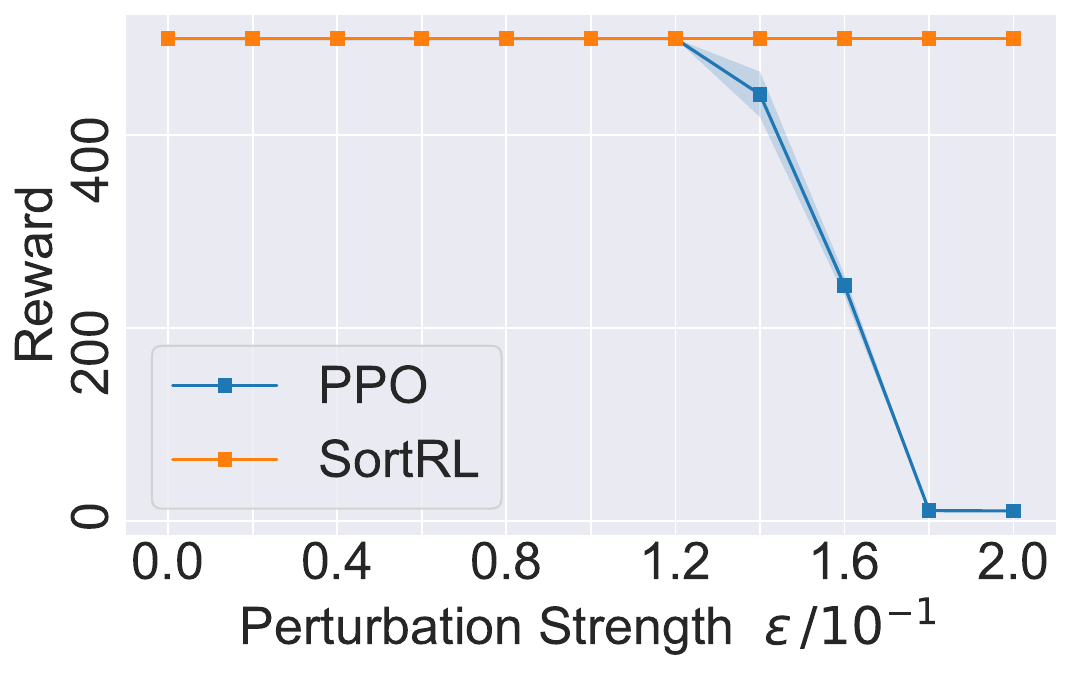}
\label{fig:cartpole_res}
}
\hspace{-4.2mm}
\subfigure[\emph{Acrobot}]{
\includegraphics[width=0.495\linewidth]{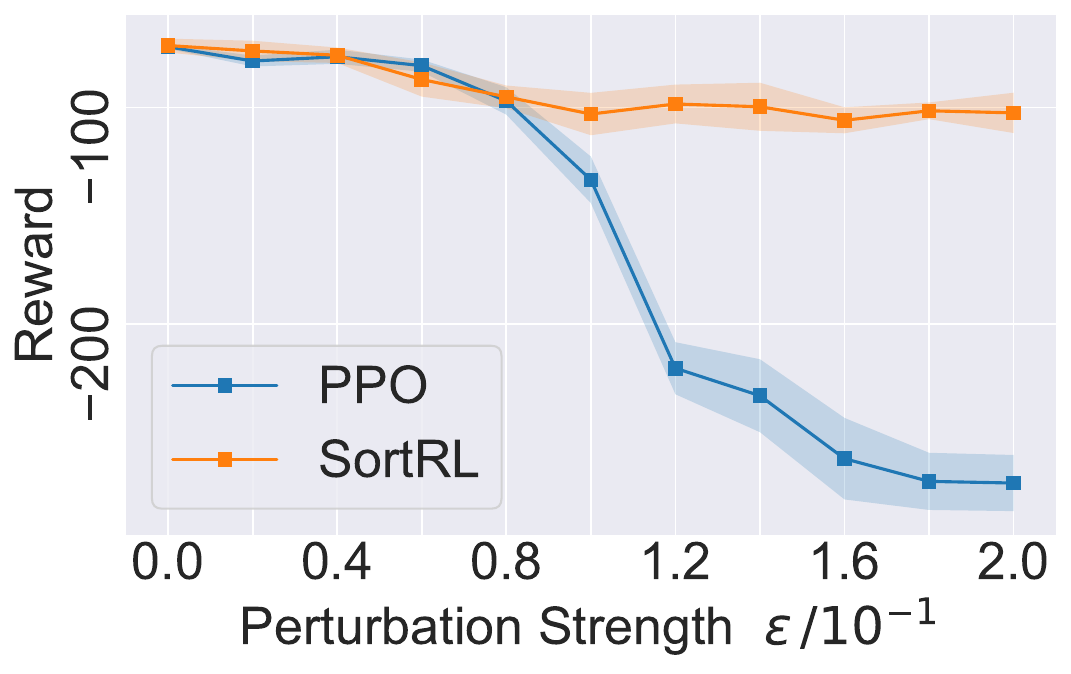}
\label{figure:acrobot_res}
}
\\
\hspace{-4.2mm}
\subfigure[\emph{MountainCar}]{
\includegraphics[width=0.495\linewidth]{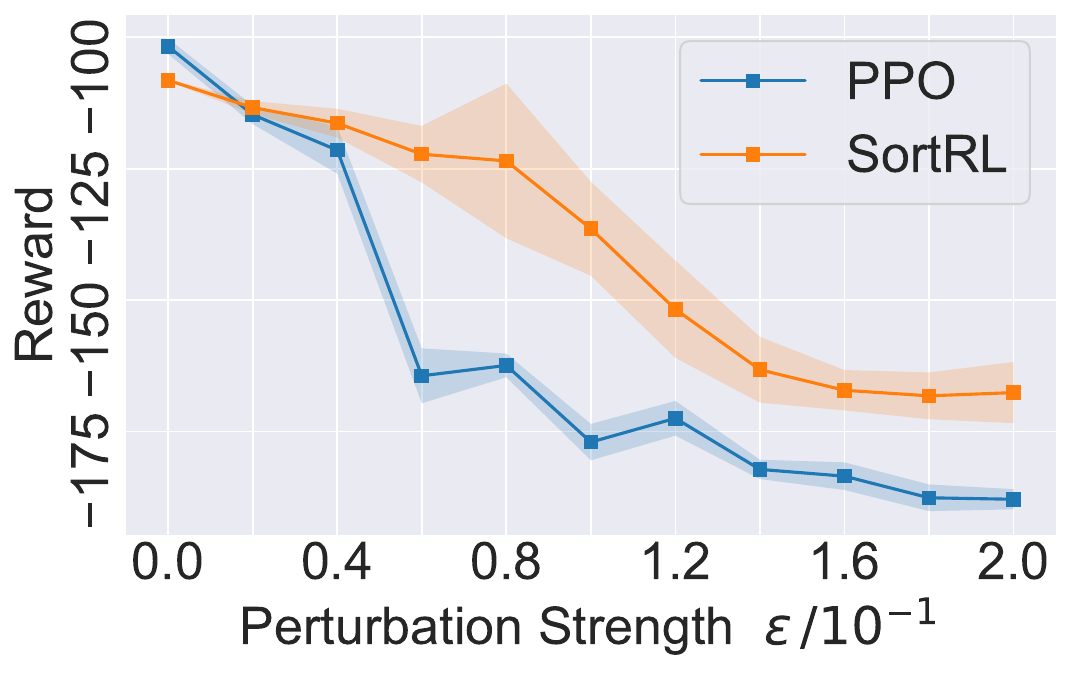}
\label{fig:moutaincar_res}
}
\hspace{-4.2mm}
\subfigure[\emph{LunarLander}]{
\includegraphics[width=0.495\linewidth]{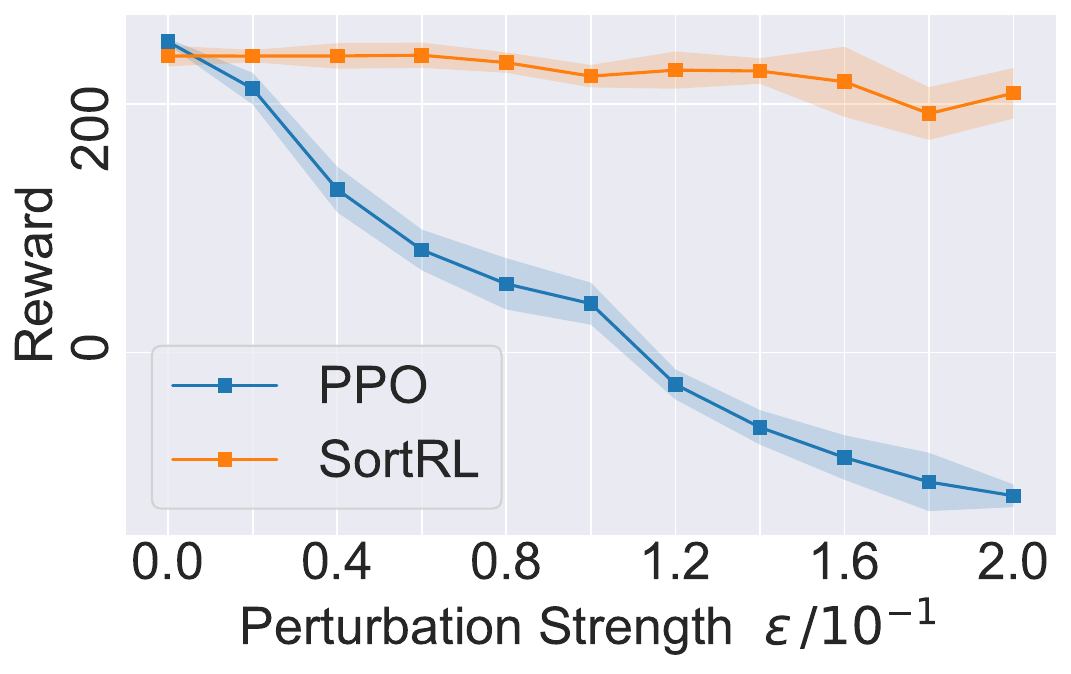}
\label{figure:lunarlander_res}
}
\caption{The experiment results on the classic control tasks.
}
\label{fig:classic_control_res}
\end{figure}

\subsection{Video Games}
\label{sec:expe_video_games}

\subsubsection{Experimental Settings.}
In this experiment, we utilize six video games listed as follows. 
Atari tasks~\cite{bellemare2013arcade}: \emph{Freeway}, \emph{RoadRunner}, \emph{Pong}, and \emph{BankHeist}.
ProcGen tasks~\cite{cobbe2020leveraging}: \emph{Jumper} and \emph{Coinrun}.
Teacher policies are constructed utilizing  DQN~\cite{mnih2015human} and PPO~\cite{schulman2017proximal} for Atari and ProcGen tasks accordingly.
The dataset $\mathcal{D}$ is composed of 100k states and corresponding teacher actions.
To evaluate the robustness,  $l_\infty$-PGD attackers with 10 steps and different strength $\epsilon\in\left\{ \frac{1}{255}, \frac{3}{255}, \frac{5}{255} \right\}$  are applied as the adversary $\nu$ in this experiment.
In each frame, the adversary $\nu$ performs untargeted  attacks on the input observation, which cheats the policy to change decisions.

\subsubsection{Baselines.}
We compare SortRL with the following representative methods: 
(1) \emph{Standard DRL algorithms}, including DQN~\cite{mnih2015human}, A3C~\cite{mnih2016asynchronous}, and PPO~\cite{schulman2017proximal}.
(2) \emph{RS-DQN}~\cite{fischer2019online} designed with adversarial training and provably robust training.
(3) \emph{SA-DQN}~\cite{zhang2020robust} regularizing policy networks based on convex relaxation.
(4) \emph{WocalR}~\cite{liang2022efficient}, which estimates and optimizes the worst-case reward of the policy network under bounded attacks.
(5) \emph{RADIAL}~\cite{oikarinen2021robust}, which trains policy networks by adversarial loss functions based on robustness bounds.

\subsubsection{Evaluation Metrics.}
(1) The episode reward against 10 steps PGD perturbations with $\epsilon\in \left\{\frac{1}{255},\frac{3}{255},\frac{5}{255} \right\}$, which is widely used in previous works~\cite{fischer2019online,zhang2020robust,oikarinen2021robust}.
(2) Action Certification Rate (ACR)~\cite{zhang2020robust}, which is designed to evaluate policy performance on certified robustness.
ACR is defined as the proportion of the actions during rollout that are guaranteed unchanged with any adversary $\nu\in\mathcal{B}_{\epsilon}^{\infty}$.
The detailed computation process of ACR for SortRL is given in Appendix D.2.

\subsubsection{Results and Analysis.}

\begin{table*}[t]
\centering
\begin{tabular}{c|l|llll|l}
\hline
\multicolumn{1}{l|}{\textbf{Task}} & {\textbf{Model/Metric}} & \multicolumn{4}{c|}{\textbf{Episode Reward}} & \textbf{ACR ($\%$)} \\ \hline
\multicolumn{1}{l|}{} & {$\epsilon$} & $0$ (nominal) & $1/255$ & $3/255$ & $5/255$ & $1/255$ \\ \hline
\multirow{6}{*}{Freeway} & {DQN} & \underline{$33.9\pm0.07$} & $0.0\pm0.0$ & $0.0\pm0.0$ & $0.0\pm0.0$ & $0.0$ \\
 & {RS-DQN} & $32.93$ & $32.53$ & N/A & N/A & N/A\\
 & {SA-DQN} & $30.0\pm0.0$ & $30.0\pm0.0$ & $30.05\pm0.05$ & $27.65\pm0.22$ & \bm{$100.0$}\\
 & {WocaR-DQN} & $31.2\pm0.4$ & $31.2\pm0.5$ & $31.4\pm0.3$ & $21.1\pm1.75$ & $99.90$\\ 
 & {RADIAL-DQN} & $33.2\pm0.19$ & \underline{$33.35\pm0.16$} & \underline{$33.4\pm0.13$} & \underline{$29.1\pm0.17$} & $99.82 $\\ \cline{2-7}  
 \rowcolor{Gray}\cellcolor{white} &{SortRL-DQN} & \bm{$33.91\pm0.32$} & \bm{$33.83\pm0.48$} & \bm{$33.94\pm0.24$} & \bm{$33.92\pm0.33$} & $99.94$ \\ \hline
\multirow{8}{*}{\begin{tabular}[c]{@{}c@{}}Road\\ Runner\end{tabular}} & {DQN} & $43390\pm973$ & $0.0\pm0.0$ & $0.0\pm0.0$ & $0.0\pm0.0$ & $0.0$ \\
 & {A3C} & $34420\pm604$ & $31040\pm2173$ & $3025\pm317$ & $350\pm93$ & $0.0$ \\
 & {RS-DQN} & $12106.67$ & $5753.33$ & N/A & N/A & N/A \\
 & {SA-DQN} & \bm{$45870\pm1380$} & $44300\pm1753$ & $20170\pm1822$ & $3350\pm335$ & $60.20$ \\
 & {RADIAL-DQN} & \underline{$44495\pm1165$} & \underline{$44445\pm1148$} & \underline{$39560\pm1621$} & $23820\pm942$ & $99.42$\\
 & {WocaR-DQN} & $44156\pm2279$ & $44079\pm2154$ & $38720\pm1765$ & $ 3490\pm1959 $ & $98.41$\\
 & {RADIAL-A3C} & $34825\pm981$ & $31960\pm933$ & $29920\pm1496$ & \underline{$31545\pm1480$} & $92.33$\\  \cline{2-7} 
 \rowcolor{Gray}\cellcolor{white} &{SortRL-DQN} & $43697\pm1457$ & \bm{$44596\pm1070$} & \bm{$39766\pm1176$} & \bm{$40905\pm1249$} & \bm{$99.98$} \\ \hline
 & {DQN} & \bm{$21.0\pm0.0$} & $-21.0\pm0.0$ & $-21.0\pm0.0$ & $-20.85\pm0.08$ & $0.0$ \\
 & {A3C} & \bm{$21.0\pm0.0$} & \bm{$21.0\pm0.0$} & \bm{$21.0\pm0.0$} & $-17.85\pm0.33$ & $0.0$\\
 & {RS-DQN} & $19.73$ & $18.13$ & N/A & N/A & N/A \\
 & {SA-DQN} & \bm{$21.0\pm0.0$} & \bm{$21.0\pm0.0$} & \bm{$21.0\pm0.0$} & $-19.75\pm0.1$ & \bm{$100.0$}\\
 & {WocaR-DQN} & \bm{$21.0\pm0.0$} & \bm{$21.0\pm0.0$} & \bm{$21.0\pm0.0$} & $-20.7\pm 0.45$ & $59.05$\\ 
 \rowcolor{Gray}\cellcolor{white} & {RADIAL-DQN} & \bm{$21.0\pm0.0$} & \bm{$21.0\pm0.0$} & \bm{$21.0\pm0.0$} & \bm{$21.0\pm0.0$} & $89.49$ \\
 \rowcolor{Gray}\cellcolor{white} &{RADIAL-A3C} & \bm{$21.0\pm0.0$} & \bm{$21.0\pm0.0$} & \bm{$21.0\pm0.0$} & \bm{$21.0\pm0.0$} & $75.53$\\ \cline{2-7} 
 \rowcolor{Gray}\cellcolor{white} \multirow{-8}{*}{Pong}  & {SortRL-DQN} & \bm{$21.0\pm0.0$} & \bm{$21.0\pm0.0$} & \bm{$21.0\pm0.0$} & \bm{$21.0\pm0.0$} & \bm{$100.0$}\\ \hline
\multirow{9}{*}{\begin{tabular}[c]{@{}c@{}}Bank\\ Heist\end{tabular}} & {DQN} & $1325.5\pm5.7$ & $29.5\pm2.4$ & $0.0\pm0.0$ & $0.0\pm0.0$ & $0.0$\\
 & {A3C} & $1109.0\pm21.4$ & $1102.5\pm49.4$ & $534.5\pm58.2$ & $115.0\pm27.8$ & $0.0$\\
 & {RS-DQN} & $238.66$ & $190.67$ & N/A & N/A & N/A \\
 & {SA-DQN} & $1237.6\pm1.7$ & $1237.0\pm2.0$ & $1213.0\pm2.5$ & $1130.0\pm29.1$ & $97.63$ \\
 & {WocaR-DQN} & $1220\pm12$ & $1220\pm3$ & $1214\pm7$ & $ 1094\pm 20 $ & $96.75$ \\
 & {RADIAL-DQN} & \bm{$1349.5\pm1.7$} & \bm{$1349.5\pm1.7$} & \bm{$1348\pm1.7$} & $1182.5\pm43.3$ & $98.17$\\ 
 & {RADIAL-A3C} & $1036.5\pm23.4$ & $975\pm22.2$ & $949\pm19.5$ & $712\pm46.4$ & $71.84$ \\  \cline{2-7} 
 & {SortRL-DQN} & $1323.8\pm6.9$ & $1325.6\pm6.5$ & $1315.1\pm5.8$ & \underline{$1317.8\pm7.2$} & $99.69$ \\
 \rowcolor{Gray}\cellcolor{white} & {SortRL-RADIAL} & \underline{$1342.8\pm5.5$} & \underline{1340.8$\pm4.7$} & \underline{$1345.2\pm4.9$} & \bm{$1341.1\pm5.1$} & \bm{$99.93$} \\ \hline
\end{tabular}
\caption{The experiment results on the Atari video games.
The best results are \textbf{boldfaced}, while the second best ones are \underline{underlined}.
The \colorbox{Gray}{{gray row}} denotes the most robust method, selected based on the score $R_{\epsilon=0} + \frac{1}{3}\sum_\epsilon R_{\epsilon}$, where $R_{\epsilon}$ is the mean episode reward given perturbation strength $\epsilon$.
N/A denotes the authors have not released results, codes, or models.
}
\label{tab:atari_res}
\end{table*}

The experiment result are shown in Table~\ref{tab:atari_res} (Atari) and Table~\ref{tab:procgen_res} (ProcGen).
As shown in the tables, SortRL outperforms baseline methods and achieves higher episode rewards on video game tasks with different perturbation strength, demonstrating the effectiveness of our approach.
Take \emph{RoadRunner} with $\epsilon=5/255$ as an instance, SortRL achieves an episode reward of $40905$ and outperforms existing state-of-the-art $31545$ by $29.6\%$.

\begin{table*}[t]
\centering
{%
\begin{tabular}{c|l|l|llll}
\hline
\textbf{Task} & \multicolumn{2}{c|}{\textbf{Model/Metric}} & \multicolumn{4}{c}{\textbf{Episode Reward}} \\ \hline
\multicolumn{1}{l|}{} & \multicolumn{1}{l|}{$\epsilon$} & Env. Type & $0$ (nominal) & $1/255$ & $3/255$ & $5/255$ \\ \hline
 \multirow{6}{*}{Jumper} & \multicolumn{1}{l|}{\multirow{2}{*}{PPO}} & Train & $8.69\pm0.11$ & $6.61\pm0.15$ & $4.50\pm0.16$ & $3.42\pm0.15$ \\
 & \multicolumn{1}{l|}{} & Eval & $4.22\pm0.16$ & $3.90\pm0.15$ & $3.10\pm0.15$ & $3.15\pm0.15$ \\ \cline{2-7} 
 & \multicolumn{1}{l|}{\multirow{2}{*}{RADIAL-PPO}} & Train & $6.59\pm0.15$ & $6.70\pm0.15$ & $6.55\pm0.15$ & $6.83\pm0.15$ \\
 & \multicolumn{1}{l|}{} & Eval & $3.85\pm0.15$ & $3.93\pm0.15$ & $3.75\pm0.15$ & $3.59\pm0.15$ \\ \cline{2-7} 
 \rowcolor{Gray}\cellcolor{white} & & Train & $\bm{9.10\pm0.28}$ & \bm{$9.10\pm0.29$} & \bm{$9.10\pm0.29$} & \bm{$9.10\pm0.29$} \\ 
 \rowcolor{Gray}\cellcolor{white} &  {\multirow{-2}{*}{SortRL-PPO (Ours)}} & Eval & \bm{$4.65\pm0.39$} & \bm{$4.63\pm0.39$} & \bm{$4.68\pm0.39$} & \bm{$4.65\pm0.39$} \\ \hline
\multirow{6}{*}{Coinrun} & \multicolumn{1}{l|}{\multirow{2}{*}{PPO}} & Train & $8.31\pm0.12$ & $6.36\pm0.15$ & $4.19\pm0.16$ & $3.32\pm0.15$ \\
 & \multicolumn{1}{l|}{} & Eval & $6.65\pm0.15$ & $5.22\pm0.16$ & $3.58\pm0.15$ & $3.36\pm0.15$ \\ \cline{2-7} 
 & \multicolumn{1}{l|}{\multirow{2}{*}{RADIAL-PPO}} & Train & $7.12\pm0.14$ & $7.10\pm0.14$ & $7.19\pm0.14$ & $7.34\pm0.14$ \\
 & \multicolumn{1}{l|}{} & Eval & $6.66\pm0.15$ & $6.71\pm0.15$ & $6.71\pm0.15$ & $6.67\pm0.15$ \\ \cline{2-7} 
 \rowcolor{Gray}\cellcolor{white} & & Train & \bm{$8.40\pm0.37$} & \bm{$8.51\pm0.36$} & \bm{$8.60\pm0.35$} & \bm{$8.41\pm0.37$} \\
  \rowcolor{Gray}\cellcolor{white} & {\multirow{-2}{*}{SortRL-PPO (Ours)}} & Eval & \bm{$7.33\pm0.44$} & \bm{$7.70\pm0.42$} & \bm{$7.23\pm0.45$} & \bm{$7.20\pm0.41$} \\ \hline
\end{tabular}%
}
\caption{The experiment results on the ProcGen video games.}
\label{tab:procgen_res}
\end{table*}

\begin{figure}[ht]
\centering
\includegraphics[width=0.999\linewidth]{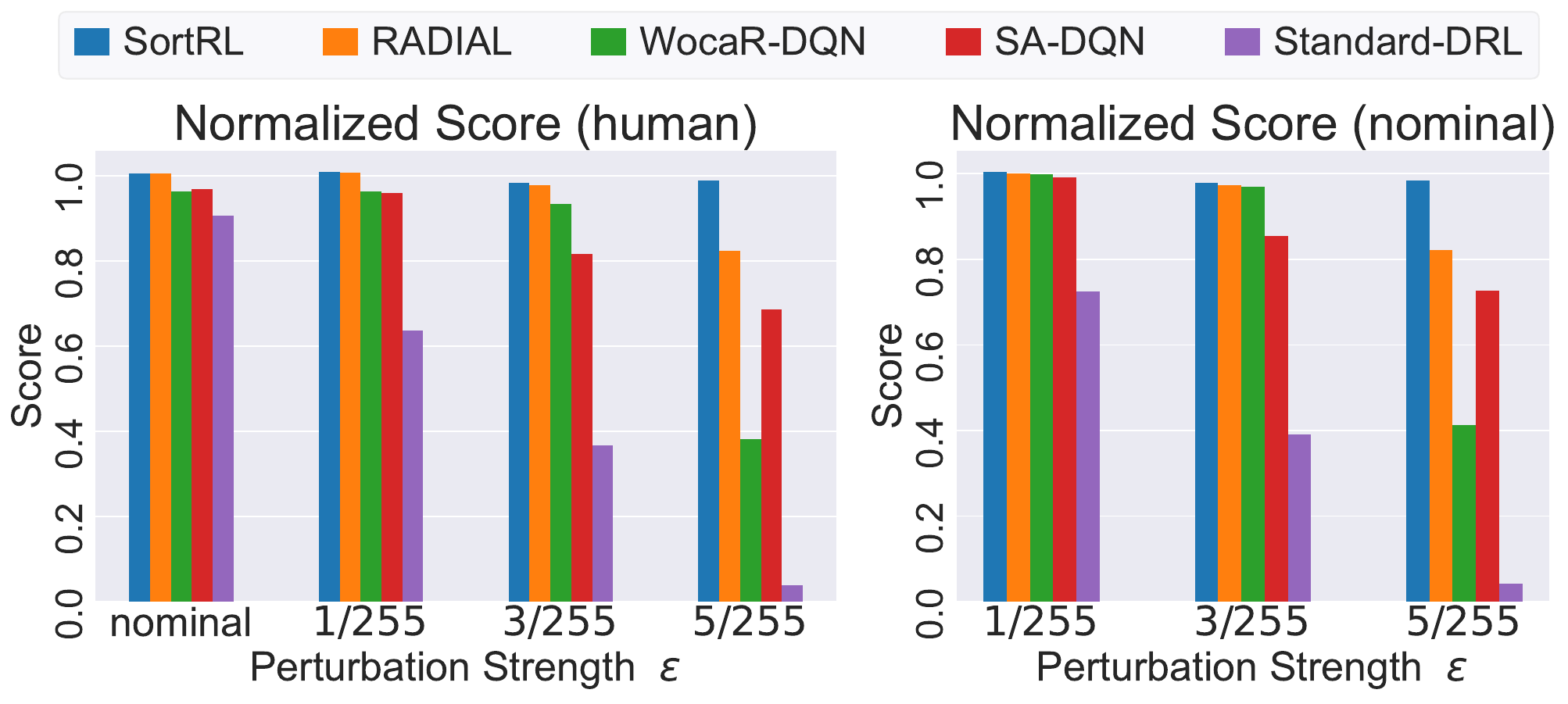}
\caption{Normalized score on Atari tasks.
\textbf{Left:} relative to the human expert.
\textbf{Right:} relative to nominal performance.
}
\label{fig:norm_atari_all}
\end{figure}

As shown in Fig.~\ref{fig:norm_atari_all}, to measure and analyze the performance, we adopt the metric of average normalized score to aggregate episode rewards across tasks.
In detail, given the episode reward $Z$, its normalized score is defined as $\frac{Z-Z_0}{Z_1-Z_0}\in[0,1]$, where $Z_0$ denotes the reward of the random policy, and $Z_1$ denotes human reward or nominal reward.
As described in Fig.~\ref{fig:norm_atari_all}, the advantage of SortRL over baseline methods increases as $\epsilon$ increases generally.
As described in the right figure, compared to the corresponding nominal performance, our method only loses performance less than $1.7\%$ against $\epsilon = {5}/{255}$, while the state-of-the-art RADIAL loses about $18\%$.
Besides, the performance of SortRL on ACR in Table~\ref{tab:atari_res} is also excellent, which is greater than $99.6\%$ in various tasks.
These results demonstrate that, compared to existing methods,  SortRL achieves
policy robustness with fewer sacrifices on the optimality and expressive power of the policy network.
This relies on the Lipschitz property at the network level and the maximization of the robust radius in the training framework. 

It is interesting that SortRL outperforms standard DRL methods in some nominal tasks, such as \emph{Freeway}, \emph{Jumper}, and \emph{Coinrun}.
This implies that robust training with suitable parameter settings may improve nominal performance in some tasks.
Besides, some methods achieve better performance against perturbations than that in nominal environments, such as in the \emph{Coinrun} task with both \emph{RADIAL} and \emph{SortRL} methods.
These interesting phenomena are also observed in previous studies~\cite{zhang2020robust,oikarinen2021robust}.
One possible explanation is that most tasks prefer smooth policies, i.e. similar decisions given similar observations.
However, policies trained by standard DRL are suboptimal due to the non-smoothness property of the policy network, especially in tasks with high-dimension observations, such as ProcGen.
Smoother policies and trajectories with higher rewards may be found through robust training or by adding perturbations to the observations. 

\subsection{Video Games with Stronger Adversaries}

\begin{figure}[tb]
\centering
\hspace{-3.5mm}
\subfigure[\emph{BankHeist}]{
\includegraphics[height=3.06cm]{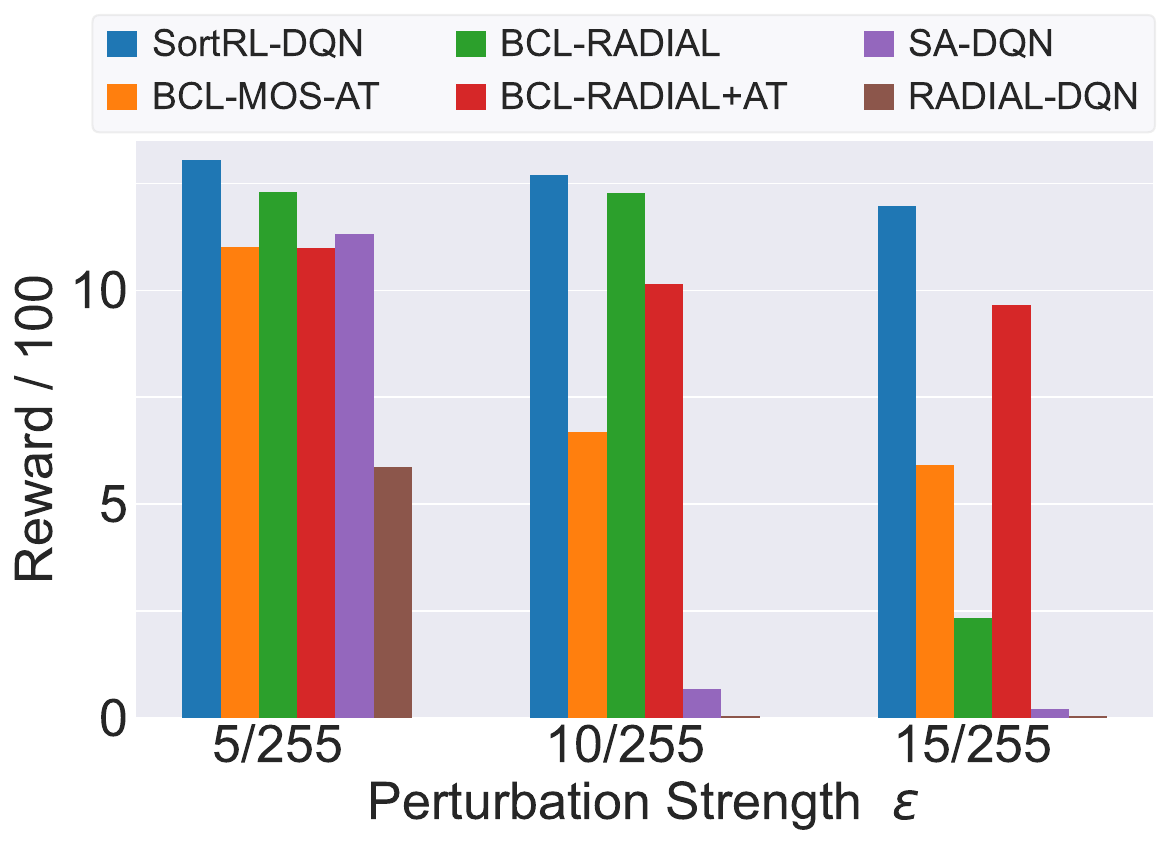}
\label{fig:bankheist_big_eps_res}
} 
\hspace{-3.5mm}
\subfigure[\emph{Freeway}]{
\includegraphics[height=3.06cm]{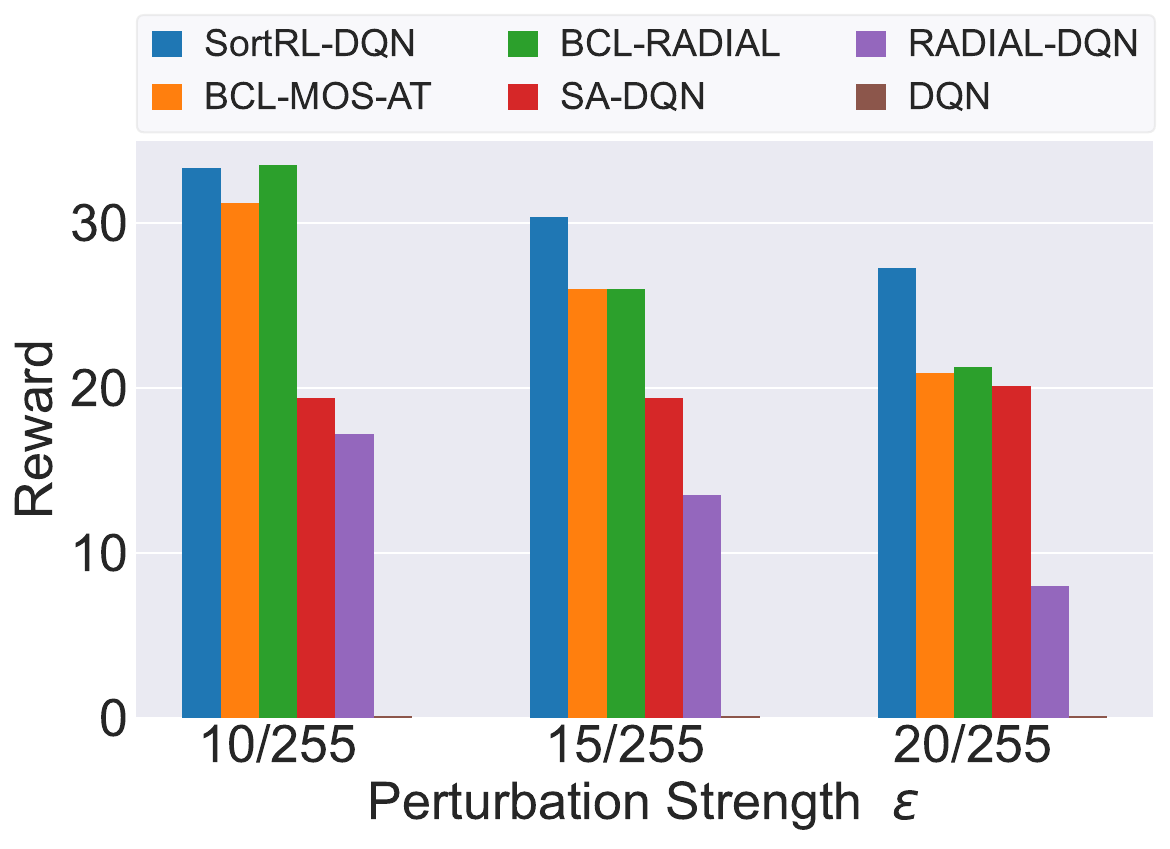}
\label{fig:freeway_big_eps_res}
}
\caption{Experiment results on video games \emph{BankHeist} and \emph{Freeway} with stronger adversaries ($\epsilon\geq5/255$).
}
\label{fig:video_games_big_eps_res}
\end{figure}

\subsubsection{Experimental Settings}
In this experiment, the same teacher policies and dataset $\mathcal{D}$ described in Sec.~\ref{sec:expe_video_games}  are utilized.
In order to evaluate policy robustness against stronger perturbations, we utilize larger strength $\epsilon > {5}/{255}$.
Besides, more attackers~\cite{wu2022robust} are utilized as adversaries $\nu$ in this experiment:
(1) PGD attacker with 30 steps. (2) FGSM attacks with Random Initialization (RI-FGSM)~\cite{wong2019fast} (3) RI-FGSM-Multi: sample multiple random starts for RI-FGSM, and choose the first sample which alters the policy decision (4) RI-FGSM-Multi-T: sample multiple random starts for RI-FGSM, and choose the sample which minimizes the estimated Q values among the samples.

\subsubsection{Baselines and Evaluation Metrics}
Most baseline methods in this section are the same as Sec.~\ref{sec:expe_video_games}, including:
(1) \emph{SA-DQN}, (2) \emph{RADIAL}.
In addition, a new benchmark (3) \emph{Bootstrapped Opportunistic Adversarial Curriculum Learning (BCL)}~\cite{wu2022robust} is utilized, which can enhance the robustness of existing robust RL methods under strong adversaries.
\emph{BCL} is an adversarial curriculum training framework, and can be combined with various robust RL methods, such as \emph{BCL-RADIAL} and \emph{BCL-MOS-AT}.

\subsubsection{Results and Analysis.}

The experiment results on the \emph{BankHeist} task with $\epsilon\in \left\{\frac{5}{255},\frac{10}{255},\frac{15}{255} \right\}$ and \emph{Freeway} task with $\epsilon\in \left\{\frac{10}{255},\frac{15}{255},\frac{20}{255} \right\}$ are illustrated in Fig.~\ref{fig:video_games_big_eps_res}.
As shown in the figures, the $x$-axis denotes the $\epsilon$ value while the $y$-axis denotes episode reward.
More experiment results are given in Appendix D.3.

As shown in Fig.~\ref{fig:video_games_big_eps_res},
SortRL achieves state-of-the-art performance compared to existing methods, especially in tasks with strong perturbations with $\epsilon \geq 15/255$.
Take the \emph{Freeway} task with $\epsilon={20}/{255}$ as an instance, SortRL achieves an episode reward of $27.2$ and outperforms existing state-of-the-art BCL-RADIAL ($21.2$) by approximately $28.3\%$.
The results demonstrate the excellent robustness of SortRL under strong perturbation strength, which relies on the Lipschitz continuity of the policy network and the robust training framework.

\section{Conclusion}
In this work, we propose a novel robust RL method called \emph{SortRL}, which improves the robustness of DRL policies against observation perturbations from the perspective of network architecture.
We employ a new policy network based on Lipschitz Neural Networks, and provide a convenient approach to optimizing policy robustness based on the output margin.
To facilitate training, we design a training framework based on Policy Distillation, which trains the policy to solve given tasks while maintaining a suitable robust radius.
Several experiments are conducted to evaluate the robustness of our method, including control tasks and video games with different perturbation strength.
The experiment results demonstrate that \emph{SortRL} outperforms existing methods on robustness.

\section*{Acknowledgments}
This work was supported by the National Natural Science Foundation of China (Grant No. 92248303 and No. 62373242), the Shanghai Municipal Science and Technology Major Project (Grant No. 2021SHZDZX0102), and the Fundamental Research Funds for the Central Universities.

\bibliography{aaai24}


\clearpage
\onecolumn
\appendix

\section{Proofs}

\subsection{Proof of Theorem~\ref{thm:v_to_pi}}
\label{sec:pf_thm_v_to_pi}
\begin{theorem*}
Given a typical MDP $\mathcal{M}$, corresponding SA-MDP $\widetilde{\mathcal{M}}$ with an adversary $\nu(s)\in \mathcal{B}_{\epsilon}^{\infty}(s)$, and a policy $\pi$, 
$V_{\pi}(s)$ and $\widetilde{V}_{\pi\circ\nu}(s)$ denote the value functions in $\mathcal{M}$ and $\widetilde{\mathcal{M}}$ accordingly. 
We have:
\begin{equation}
     \max_{s\in\mathcal{S}} \{ V_{\pi}(s) - \min_{\nu} \widetilde{V}_{\pi\circ\nu}(s) \} 
     \leq \alpha \max_{s\in\mathcal{S}} \max_{\nu} \sqrt{ D_{\operatorname{KL}}(\pi(s), \pi(\hat{s}))},\\
\end{equation}
where $\alpha= \sqrt{2} \left[1+\frac{\gamma}{\left(1-\gamma\right)^2}\right] \max_{(s,a,s')}|R(s,a,s')|$ is a constant independent of the policy, $\hat{s}\sim\nu(s)$ denotes perturbed observation, and
$D_{\operatorname{KL}}(\cdot, \cdot)$ denotes KL-divergence.
\end{theorem*}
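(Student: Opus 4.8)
The plan is to recast the state adversary as an ordinary change of policy inside the clean MDP $\mathcal{M}$, then invoke a trust-region--style performance-difference bound and finish with Pinsker's inequality. First I would observe that for any fixed adversary $\nu$ the composed behaviour $\pi\circ\nu$ evolves in $\mathcal{M}$ exactly like the stationary policy $\pi_{\nu}(a\mid s)\coloneqq\mathbb{E}_{\hat{s}\sim\nu(s)}\bigl[\pi(a\mid\hat{s})\bigr]$: marginalising over $\hat{s}_t$ in the definition of $\widetilde{V}_{\pi\circ\nu}$ leaves precisely $\mathcal{M}$ run under $\pi_{\nu}$, so $\widetilde{V}_{\pi\circ\nu}(s)=V_{\pi_{\nu}}(s)$ for all $s$. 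Hence it is enough to prove, for every fixed $\nu$ and every $s$, that $V_{\pi}(s)-V_{\pi_{\nu}}(s)$ is at most the stated right-hand side, and then take the minimum over $\nu$ and the maximum over $s$ on the left, the right-hand side being already $\nu$- and $s$-free since it is a double maximum.

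Second I would establish, for arbitrary fixed $\nu$,
\begin{equation*}
\max_{s\in\mathcal{S}}\bigl|V_{\pi}(s)-V_{\pi_{\nu}}(s)\bigr|\;\le\;2\Bigl[1+\tfrac{\gamma}{(1-\gamma)^{2}}\Bigr]\Bigl(\max_{(s,a,s')}|R(s,a,s')|\Bigr)\max_{s\in\mathcal{S}}D_{\operatorname{TV}}\bigl(\pi(\cdot\mid s),\pi_{\nu}(\cdot\mid s)\bigr),
\end{equation*}
which is Theorem~5 of \cite{zhang2020robust} and is proved by the performance-difference argument of \cite{achiam2017constrained}: subtract the Bellman equations for $V_{\pi}$ and $V_{\pi_{\nu}}$, write the one-step discrepancy at $s$ as $\sum_{a}\bigl(\pi(a\mid s)-\pi_{\nu}(a\mid s)\bigr)\bigl(Q_{\pi}(s,a)-V_{\pi}(s)\bigr)$ (recentring by $V_{\pi}(s)$ is free because the two action distributions have equal total mass), bound it by $D_{\operatorname{TV}}(\pi(\cdot\mid s),\pi_{\nu}(\cdot\mid s))$ times the span of $Q_{\pi}(s,\cdot)$, and unroll the resulting $\gamma$-contracted recursion at the maximising state; a careful accounting of the undiscounted first step versus the discounted tail, together with $|Q_{\pi}(s,a)|\le\max|R|/(1-\gamma)$, produces the constant $2[1+\gamma/(1-\gamma)^{2}]\max|R|$.

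Third I would dispose of the still-policy-dependent total-variation term. Since $\pi_{\nu}(\cdot\mid s)=\mathbb{E}_{\hat{s}\sim\nu(s)}[\pi(\cdot\mid\hat{s})]$ with $\nu(s)$ supported in $\mathcal{B}_{\epsilon}^{\infty}(s)$, the triangle inequality for total variation gives
\begin{equation*}
D_{\operatorname{TV}}\bigl(\pi(\cdot\mid s),\pi_{\nu}(\cdot\mid s)\bigr)\;\le\;\mathbb{E}_{\hat{s}\sim\nu(s)}\,D_{\operatorname{TV}}\bigl(\pi(\cdot\mid s),\pi(\cdot\mid\hat{s})\bigr)\;\le\;\max_{\hat{s}\in\mathcal{B}_{\epsilon}^{\infty}(s)}D_{\operatorname{TV}}\bigl(\pi(\cdot\mid s),\pi(\cdot\mid\hat{s})\bigr),
\end{equation*}
and Pinsker's inequality gives $D_{\operatorname{TV}}(p,q)\le\sqrt{\tfrac12\,D_{\operatorname{KL}}(p,q)}$. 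Chaining the three displays, the constant becomes $2[1+\gamma/(1-\gamma)^{2}]\max|R|\cdot\tfrac{1}{\sqrt2}=\alpha$ and the bound reads $V_{\pi}(s)-\widetilde{V}_{\pi\circ\nu}(s)\le\alpha\max_{s}\max_{\nu}\sqrt{D_{\operatorname{KL}}(\pi(s),\pi(\hat{s}))}$ for every $\nu$; taking $\min_{\nu}$ and then $\max_{s}$ on the left yields the theorem.

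The step I expect to be the main obstacle is the middle one: squeezing out the precise constant $1+\gamma/(1-\gamma)^{2}$ requires carefully separating the first-step reward discrepancy (which enters undiscounted) from the discounted continuation and bounding the span of $Q_{\pi}$ sharply in terms of $\max|R|$, and it requires keeping the total-variation estimate a worst-case (rather than occupancy-weighted) bound over all states so that the per-adversary inequality can be applied to the minimising $\nu$. Once that bound is in place, the reduction to a policy change, the convexity/triangle-inequality step, and Pinsker's inequality are all routine.
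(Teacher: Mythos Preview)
Your proposal is correct and follows essentially the same route as the paper: both reduce the adversary to an ordinary policy change in $\mathcal{M}$, invoke the CPO/SA-DQN performance-difference bound in total variation (the paper re-derives it from the CPO inequality while you cite Theorem~5 of \cite{zhang2020robust} directly and sketch its proof), and then convert to KL via Pinsker to obtain the constant $\alpha$. Your version is slightly more careful in handling stochastic adversaries via the convexity/triangle-inequality step, whereas the paper simply writes $\pi'(s)\leftarrow\pi(\nu^{*}(s))$ as if $\nu^{*}$ were deterministic, but the underlying argument is the same.
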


\begin{proof}
This proof is given according to SA-DQN~\cite{zhang2020robust} based on Constrained Policy Optimization (CPO)~\cite{achiam2017constrained}.
Given any starting state $s$  two policies $\pi$ and $\pi^\prime$, Achiam et al.~\cite{achiam2017constrained} proposed that the upper bound of $V_{\pi}(s_0) - V_{\pi^\prime}(s_0)$ can be expresses as follows:
\begin{equation}
\label{eq_pf:cpo_0}
\begin{aligned}
    V_{\pi}(s_0) - V_{\pi^\prime}(s_0) & \leq \frac{2 \gamma}{(1-\gamma)^2} \max_s\big|\mathbb{E}_{a \sim \pi^{\prime}, s^{\prime} \sim P}\left[R\left(s, a, s^{\prime}\right)\right]\big| \mathbb{E}_{s \sim d_{s_0}^\pi}\left[\mathrm{D}_{T V}\left(\pi(s), \pi^{\prime}(s)\right)\right] \\
    & \quad - \frac{1}{1-\gamma} \mathbb{E}_{s \sim d_{s_0}^\pi, a \sim \pi, s^{\prime} \sim P} \left[\left(\frac{\pi^{\prime}(a | s)}{\pi(a|s)}-1\right) R\left(s, a, s^{\prime}\right)\right], \\ 
\end{aligned}
\end{equation}
where $d_{s_0}^\pi(s)=(1-\gamma) \sum_{t=0}^{\infty} \gamma^t \Pr\left(s_t=s | \pi, s_0\right)$ denotes the discounted future state distribution.
$D_{\operatorname{TV}}$ denotes the total variance distance.
We can obtain that:
\begin{equation}
\label{eq_pf:cpo_1}
\begin{aligned}
    &\max_s\big|\mathbb{E}_{a \sim \pi^{\prime}, s^{\prime} \sim P}\left[R\left(s, a, s^{\prime}\right)\right]\big| \mathbb{E}_{s \sim d_{s_0}^\pi}\left[\mathrm{D}_{T V}\left(\pi(s), \pi^{\prime}(s)\right)\right] \\
    & \leq \frac{1}{\sqrt{2}} \max_{(s,a,s')}\left|R\left(s,a,s^\prime\right)\right| \max_{s} \sqrt{ D_{\operatorname{KL}}\left(\pi(s), \pi\left(s^\prime\right)\right)}, \\ 
\end{aligned}
\end{equation}
where $D_{\operatorname{KL}}$ denotes the Kullback–Leibler divergence.
Besides, we also have:
\begin{equation}
\label{eq_pf:cpo_2}
\begin{aligned}
    - \mathbb{E}_{s \sim d_{s_0}^\pi, a \sim \pi, s^{\prime} \sim P} \left[\left(\frac{\pi^{\prime}(a | s)}{\pi(a | s)}-1\right) R\left(s, a, s^{\prime}\right)\right] &=  \mathbb{E}_{s \sim d_s^\pi}\left[ \sum_a\left[\pi(a| s)-\pi^{\prime}(a| s)\right] \sum_{s^{\prime}} p\left(s^{\prime}| s, a\right) R\left(s, a, s^{\prime}\right) \right]\\
    &\leq \mathbb{E}_{s \sim d_{s_0}^\pi}\left[ \sum_a\big|\pi(a| s)-\pi^{\prime}(a| s)\big| \big|\sum_{s^{\prime}} p\left(s^{\prime}| s, a\right) R\left(s, a, s^{\prime}\right)\big| \right] \\
    & \leq (1-\gamma) \max_{(s,a,s')}\left|R\left(s,a,s^\prime\right)\right| \max_{s} \left\{\sum_a\left|\pi(a | s)-\pi^{\prime}(a | s)\right|\right\} \\
    & \leq \sqrt{2}(1-\gamma) \max_{(s,a,s')}\left|R\left(s,a,s^\prime\right)\right| \max_{s} \sqrt{ D_{\operatorname{KL}}\left(\pi(s), \pi\left(s^\prime\right)\right)}. \\
\end{aligned}
\end{equation}
We let $\pi^\prime(s) \leftarrow \pi(\nu^*(s))$, where $\nu^*(s) = \arg\min_{\nu} \widetilde{V}_{\pi\circ\nu}(s)$ denotes the best adversary.
Based on Eq.~\eqref{eq_pf:cpo_0}, Eq.~\eqref{eq_pf:cpo_1}, and Eq.~\eqref{eq_pf:cpo_2}, we can obtain that:
\begin{equation}
V_{\pi}(s_0) - \widetilde{V}_{\pi\circ\nu^*}(s_0) \leq 
\alpha \max_{s^\prime} \sqrt{ D_{\operatorname{KL}}(\pi(s), \pi(s^\prime))},
\end{equation}
where $\alpha = \sqrt{2} \left[1+\frac{\gamma}{\left(1-\gamma\right)^2}\right] \max_{(s,a,s')}|R(s,a,s')|$.
The proof is completed.

\end{proof}

\subsection{Proof of Eq.~\eqref{eq:radius_to_no_v_gap}}
\label{sec:pf_radius_to_no_v_gap}
\begin{statement}
    $\forall s, \, \mathcal{R}(\pi,s) \geq \epsilon \implies \forall s, \, \min_{\nu} \widetilde{V}_{\pi\circ\nu}(s) \geq V_{\pi}(s)$.
\end{statement}

\begin{proof}
The robust radius of policies is defined as $\mathcal{R}(\pi, s) = \inf_{\pi(\hat{s})\neq\pi(s)} \|\hat{s}-s\|_\infty.$
Thus, if $\mathcal{R}(\pi,s) \geq \epsilon$ holds for $\forall s\in\mathcal{S}$, we can obtain that $\pi(s)=\pi(\hat{s})$, i.e. $D_{\operatorname{KL}}(\pi(s), \pi(\hat{s}))=0$ for $\forall s\in\mathcal{S}$.

Then we can obtain the following formulation based on Theorem 1:
\begin{equation*}
    \max_{s\in\mathcal{S}} \{ V_{\pi}(s) - \min_{\nu} \widetilde{V}_{\pi\circ\nu}(s) \}
    \leq \alpha \max_{s\in\mathcal{S}} \max_{\nu} \sqrt{ D_{\operatorname{KL}}(\pi(s), \pi(\hat{s}))} = 0.
\end{equation*}

Therefore, $\min_{\nu} \widetilde{V}_{\pi\circ\nu}(s) \geq V_{\pi}(s)$ holds for $\forall s\in\mathcal{S}$.
The proof is completed.

\end{proof}

\subsection{Proof of Proposition~\ref{prop:sortnet_lipschitz}}
\label{sec:pf_prop_sortnet_lipschitz}
\begin{proposition*}
    The score function $g^\pi(s)$ is $1$-Lipschitz continuous with respect to $l_\infty$ norm, i.e. 
    \begin{equation*}
         \left\| g^\pi(s_1) - g^\pi(s_2) \right\|_{\infty} \leq \left\| s_1 - s_2 \right\|_{\infty}, \, \forall s_1, s_2\in \mathcal{S}.
    \end{equation*}
\end{proposition*}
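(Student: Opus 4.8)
The plan is to show that $g^\pi$ is a composition of $l_\infty$ non-expansive maps, so that the full network inherits the $1$-Lipschitz property with respect to $\|\cdot\|_\infty$. Since the final output is $g^\pi(s) = -(\boldsymbol{x}^{(M)} + \boldsymbol{b}^{\mathrm{out}})$, and adding a constant vector and negating are both $l_\infty$ isometries, it suffices to prove that each layer map $\boldsymbol{x}^{(l-1)} \mapsto \boldsymbol{x}^{(l)}$ defined by Eq.~\eqref{eq:sortnet_layer} is $1$-Lipschitz in $l_\infty$; the result then follows by composing over $l = 1, \dots, M$.

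For a single layer, fix the unit index $k$ and write the map as a composition of three pieces: (i) the translation $\boldsymbol{x} \mapsto \boldsymbol{x} + \boldsymbol{b}^{(l,k)}$, which is an $l_\infty$ isometry; (ii) the coordinatewise absolute value $\boldsymbol{y} \mapsto |\boldsymbol{y}|$, which is $1$-Lipschitz in every $l_p$ norm since $\big||y_i| - |z_i|\big| \le |y_i - z_i|$ entrywise; (iii) the sorting operator $\operatorname{sort}(\cdot)$, which rearranges coordinates in decreasing order. The key classical fact to invoke here is that sorting is non-expansive in $l_\infty$ (indeed in every $l_p$): $\|\operatorname{sort}(\boldsymbol{u}) - \operatorname{sort}(\boldsymbol{v})\|_\infty \le \|\boldsymbol{u} - \boldsymbol{v}\|_\infty$, which can be seen e.g.\ from the fact that the $j$-th largest entry is a $1$-Lipschitz function of the input vector (it equals $\min$ over $(j-1)$-subsets of the max over the complement, or via a rearrangement/exchange argument). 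Composing (i)--(iii), the vector $\boldsymbol{u}^{(l,k)}(\boldsymbol{x}) \coloneqq \operatorname{sort}(|\boldsymbol{x} + \boldsymbol{b}^{(l,k)}|)$ satisfies $\|\boldsymbol{u}^{(l,k)}(\boldsymbol{x}_1) - \boldsymbol{u}^{(l,k)}(\boldsymbol{x}_2)\|_\infty \le \|\boldsymbol{x}_1 - \boldsymbol{x}_2\|_\infty$.

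It then remains to handle the linear read-out $x^{(l)}_k = (\boldsymbol{w}^{(l,k)})^{\mathrm T}\, \boldsymbol{u}^{(l,k)}(\boldsymbol{x})$. Here I would use that the weights are the fixed geometric sequence $\omega^{(l,k)}_i = (1-\rho)\rho^{i-1}$, which are nonnegative and sum to $1$ (a geometric series with ratio $\rho \in [0,1)$, truncated to $d_l$ terms — one should note the sum over finitely many terms is $1 - \rho^{d_l} \le 1$, so strictly speaking the weights form a sub-probability vector, which only helps). Hence $\boldsymbol{w}^{(l,k)}$ has $\|\boldsymbol{w}^{(l,k)}\|_1 \le 1$, and by Hölder's inequality $|x^{(l)}_k(\boldsymbol{x}_1) - x^{(l)}_k(\boldsymbol{x}_2)| \le \|\boldsymbol{w}^{(l,k)}\|_1 \|\boldsymbol{u}^{(l,k)}(\boldsymbol{x}_1) - \boldsymbol{u}^{(l,k)}(\boldsymbol{x}_2)\|_\infty \le \|\boldsymbol{x}_1 - \boldsymbol{x}_2\|_\infty$. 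Taking the max over $k$ gives $\|\boldsymbol{x}^{(l)}_1 - \boldsymbol{x}^{(l)}_2\|_\infty \le \|\boldsymbol{x}^{(l-1)}_1 - \boldsymbol{x}^{(l-1)}_2\|_\infty$, and chaining these inequalities from $l=1$ to $M$ (then applying the isometry $-(\,\cdot\,) - \boldsymbol{b}^{\mathrm{out}}$) completes the proof.

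\textbf{Main obstacle.} The only non-routine ingredient is the non-expansiveness of $\operatorname{sort}(\cdot)$ in $l_\infty$; the translation, absolute value, and the $\|\boldsymbol{w}\|_1 \le 1$ read-out steps are immediate. I would either cite this as a known property of the GroupSort/SortNet construction (it is established in \cite{anil2019sorting, zhang2022rethinking}) or give a short self-contained argument: for any $\boldsymbol{u}, \boldsymbol{v}$ and any index $j$, the $j$-th order statistic satisfies $|u_{[j]} - v_{[j]}| \le \|\boldsymbol{u} - \boldsymbol{v}\|_\infty$, because $u_{[j]} \le v_{[j]} + \|\boldsymbol{u}-\boldsymbol{v}\|_\infty$ (the set of $j$ coordinates achieving the top-$j$ values of $\boldsymbol{u}$ all have $\boldsymbol{v}$-values at least $u_{[j]} - \|\boldsymbol{u}-\boldsymbol{v}\|_\infty$, forcing $v_{[j]}$ to be at least that large) and symmetrically; taking the max over $j$ yields the claim.
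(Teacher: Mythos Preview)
Your proposal is correct and follows essentially the same strategy as the paper: decompose each layer into translation, coordinatewise absolute value, $\operatorname{sort}$, and a linear read-out with $\|\boldsymbol{w}\|_1\le 1$, show each is $l_\infty$ non-expansive, and compose. Your treatment of the affine/read-out step (via H\"older and the observation that the truncated geometric weights sum to $1-\rho^{d_l}\le 1$) is exactly what the paper does, though the paper phrases it as an operator-norm bound.

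The one substantive difference is in how the non-expansiveness of $\operatorname{sort}$ is established. The paper decomposes $\operatorname{sort}$ into a sequence of compare-and-swap maps $\psi^{(i,j)}$ (bubble sort) and proves each swap is $l_\infty$ $1$-Lipschitz by an exhaustive four-case analysis on the relative order of $x_{1,i},x_{1,j},x_{2,i},x_{2,j}$. Your order-statistic argument---that the top-$j$ indices of $\boldsymbol{u}$ witness at least $j$ coordinates of $\boldsymbol{v}$ above $u_{[j]}-\|\boldsymbol{u}-\boldsymbol{v}\|_\infty$, hence $v_{[j]}\ge u_{[j]}-\|\boldsymbol{u}-\boldsymbol{v}\|_\infty$---is shorter, avoids casework, and immediately yields the $l_\infty$ bound for every order statistic at once. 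The paper's swap decomposition, on the other hand, makes explicit that $\operatorname{sort}$ is a composition of GroupSort-type operations, which connects more directly to the Lipschitz-network literature it cites. Both arguments are valid; yours is the more economical one.
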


\begin{proof}
The proof is inspired by~\cite{zhang2022rethinking}.
The $l$-th layer of the function $g^\pi(s)$ is formulated as follows:
\begin{equation*}
    x_k^{(l)}=\left(\boldsymbol{w}^{(l, k)}\right)^{\mathrm{T}} \operatorname{sort}\left(\left|\boldsymbol{x}^{(l-1)}+\boldsymbol{b}^{(l, k)}\right|\right),
    \omega^{(l,k)}_i = (1-\rho)\rho^{i-1}, \, 1\leq l\leq M, \, 1\leq k\leq d_l,
\end{equation*}
where $\boldsymbol{x}^{(l)}_{k}$ denotes the $k$-th unit in the $l$-th layer, $d_l$ is the size of $l$-th network layer, and $\rho \in [0,1)$ is a hyper-parameter.
\begin{lemma*}
    Given any $l_\infty$ $1$-Lipschitz continuous functions $\phi_1(x)$ and $\phi_2(x)$, the composite function $\phi_1\left(\phi_2\left(x\right)\right)$ is also $l_\infty$ $1$-Lipschitz continuous.
\end{lemma*}
Thus, we can prove the Lipschitz continuity of $g^\pi(x)$ by checking its basic operations.
$g^\pi(x)$ is composed of (a) affine transformations $\phi_1(x)=\boldsymbol{\omega}x + \boldsymbol{b}$, where $\omega^{}_i = (1-\rho)\rho^{i-1}$; (b) element-wise absolute value operation $\phi_2(x)=| x |$; (c) Sort operation $\phi_3(x)=\operatorname{sort}(x) = \left[x_{[1]}, \cdots, x_{[d]}\right]^{\mathrm{T}}$, where $x_{[k]}$ is the $k$-th largest element of $\boldsymbol{x}\in \mathbb{R}^d$.
In the following, we give detailed proof that the above three operations are $l_\infty$ $1$-Lipschitz continuous.

\begin{enumerate}[(a)]
\item Lipschitz of the affine transformation $\phi_1(x)$ is proved with the definition of Lipschitz continuity.
Given $\forall x_1, x_2$, we have:
\begin{equation}
\begin{aligned}
    \| \phi_1(x_1) - \phi_1(x_2) \|_\infty 
    &= \| \boldsymbol{\omega}(x_1 - x_2) \|_\infty \\
    &\leq \|\boldsymbol{\omega}\|_\infty \, \| x_1 - x_2 \|_\infty \\
    &\leq  \| x_1 - x_2 \|_\infty.
\end{aligned}
\end{equation}
Thus, the function $\phi_1(x)$ is $l_\infty$ $1$-Lipschitz continuous.

\item Similarly, the element-wise absolute value operation $\phi_2(x)=| x |$ is also Lipschitz continuous:
\begin{equation}
\begin{aligned}
    \| \phi_2(x_1) - \phi_2(x_2) \|_\infty 
    = \| |x_1| - |x_2| \|_\infty 
    \leq  \| x_1 - x_2 \|_\infty.
\end{aligned}
\end{equation}

\item The sort operation $\phi_3(x)=\operatorname{sort}(x)$ is also Lipschitz continuous.
In order to simplify the proof, we introduce a new function called $\psi^{(i,j)}(x)$, which is defined as follows:
\begin{equation*}
\psi^{(i,j)}_k(x) = 
\begin{cases}
\max\{ x_i, x_j \}, & k=i, \\
\min\{ x_i, x_j \}, & k=j, \\
x_k,  & \text{Otherwise}, \\
\end{cases}
\end{equation*}
where $x\in\mathbb{R}^{d}$, $1\leq i < j \leq d$.
As shown in the equation, $\psi^{(i,j)}(x)$ swaps $x_i$ and $x_j$ if $x_i < x_j, i < j$.
The $\phi_3(x)=\operatorname{sort}(x)$ can be implemented with $\psi^{(i,j)}$ functions on $x$ for all $1\leq i < j \leq d$, 
similar to the bubble sort. 
Thus, we only need to prove that $\psi^{(i,j)}$ is $l_\infty$ $1$-Lipschitz continuous.

For readability, we utilize $\psi(x)$ to denote $\psi^{(i,j)}(x)$ in the following proof.
Given $\forall x_1, x_2$ and $\psi^{(i,j)}(x)$, we are required to prove that:
\begin{equation}
\label{eq_pf:psi_0}
\begin{aligned}
    \| \psi(x_1) - \psi(x_2) \|_\infty = \max_k |\psi_k(x_1) - \psi_k(x_2)| \leq \max_k |x_{1,k} - x_{2,k}|
\end{aligned}
\end{equation}
If $x_{1,i} > x_{1,j}$ and $x_{2,i} > x_{2,j}$, Eq.~\eqref{eq_pf:psi_0} holds with $\psi(x_1)=x_1$, $\psi(x_2)=x_2$.
Similarly, if $x_{1,i} < x_{1,j}$ and $x_{2,i} < x_{2,j}$, Eq.~\eqref{eq_pf:psi_0} holds with $\max_k |\psi_k(x_1) - \psi_k(x_2)| = \max_k |x_{1,k} - x_{2,k}|$.
Therefore, we only need to consider the case that only one of $x_1$ and $x_2$ is modified by $\psi$.
Without loss of generality, let $x_{1,i}\leq x_{i,j}$ and $x_{2,i}\geq x_{2,j}$.
Firstly, we aim to prove that:
\begin{equation}
\label{eq_pf:psi_1}
\max_{k\in\{i,j\}} |\psi_k(x_1) - \psi_k(x_2)|
 = \max\left\{ \left| x_{1,j} - x_{2,i} \right|, \left| x_{1,i} - x_{2,j} \right| \right\}
\leq \max_{k\in\{i,j\}} |x_{1,k} - x_{2,k}|.
\end{equation}
\textbf{Case 1:} $x_{1,i} \leq x_{1,j} \leq x_{2,j} \leq x_{2,i}$ or $x_{1,i} \leq x_{2,j} \leq x_{1,j} \leq x_{2,i}$:

We have $\left| x_{1,j} - x_{2,i} \right| = x_{2,i}-x_{1,j} \leq x_{2,i} - x_{1,i}$ and $\left| x_{1,i} - x_{2,j} \right| = x_{2,j}-x_{1,i} \leq x_{2,i} - x_{1,i}$.
Thus, Eq.~\eqref{eq_pf:psi_1} holds in this case.

\textbf{Case 2:} $x_{1,i} \leq x_{2,j} \leq x_{2,i} \leq x_{1,j}$:

We have $\left| x_{1,j} - x_{2,i} \right| = x_{1,j} - x_{2,i} \leq x_{2,i} - x_{2,j}$.
Besides, $\left| x_{1,i} - x_{2,j} \right| = x_{2,j}-x_{1,i} \leq x_{2,i} - x_{1,i}$.
Thus, Eq.~\eqref{eq_pf:psi_1} holds in this case.

\textbf{Case 3:} $x_{2,j} \leq x_{2,i} \leq x_{1,i} \leq x_{1,j}$ or $x_{2,j} \leq x_{1,i} \leq x_{2,i} \leq x_{1,j}$:

We have $\left| x_{1,j} - x_{2,i} \right| = x_{1,j} - x_{2,i} \leq x_{1,j} - x_{2,j}$ and $\left| x_{1,i} - x_{2,j} \right| = x_{1,i} - x_{2,j} \leq x_{1,j} - x_{2,j}$.
Thus, Eq.~\eqref{eq_pf:psi_1} holds in this case.

\textbf{Case 4:} $x_{2,j} \leq x_{1,i} \leq x_{1,j} \leq x_{2,i}$:

We have $\left| x_{1,j} - x_{2,i} \right| = x_{2,i} - x_{1,j} \leq x_{2,i} - x_{1,i}$.
Besides, $\left| x_{1,i} - x_{2,j} \right| = x_{1,i} - x_{2,j} \leq x_{1,j} - x_{2,j}$.
Thus, Eq.~\eqref{eq_pf:psi_1} holds in this case.

Above all, Eq.~\eqref{eq_pf:psi_1} holds for $\psi(x)$ given $\forall x_1, x_2$.
Besides, the $\psi(x)$ does not modify the values of $\psi_k(x), k\notin\{i,j\}$, thus we have:
\begin{equation}
\label{eq_pf:psi_2}
    \max_{k\notin\{i,j\}} |\psi_k(x_1) - \psi_k(x_2)| 
    \leq \max_{k\notin\{i,j\}} |x_{1,k} - x_{2,k}| \leq \max_{k} |x_{1,k} - x_{2,k}|.
\end{equation}
Based on Eq.~\eqref{eq_pf:psi_1} and Eq.~\eqref{eq_pf:psi_2}, we can draw the conclusion illustrated in Eq.~\eqref{eq_pf:psi_0}.
Therefore, the function $g^\pi$ is $l_\infty$ $1$-Lipschitz continuous.
\end{enumerate}

\end{proof}

\subsection{Proof of Theorem~\ref{thm:policy_robust_radius}}
\label{sec:pf_thm_policy_robust_radius}
\begin{theorem*}
Given a SortRL policy $ \pi(a|s)=\mathds{1}\left( a = \arg\max_{a_i} g^\pi_i(s) \right)$, where $g^\pi_i(s)$ is an $l_\infty$ $1$-Lipschitz continuous function, 
the lower bound of the robust radius for $\pi$ can be expressed as follows:
\begin{equation*}
    \mathcal{R}(\pi, s)\geq \frac{1}{2}\operatorname{margin}(g^\pi, s) , \, \forall s\in \mathcal{S},
\end{equation*}
where $\operatorname{margin}(g^\pi, s)$ denotes the difference between the largest and second-largest action scores output by $g^\pi$ at state $s$.
\end{theorem*}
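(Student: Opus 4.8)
The plan is to leverage Proposition~\ref{prop:sortnet_lipschitz}, which guarantees that $g^\pi$ is $l_\infty$ $1$-Lipschitz, and translate the output margin at $s$ into a guaranteed ball in input space where the $\arg\max$ cannot change. First I would fix an arbitrary state $s$ and let $a_1 = \arg\max_{a_i} g^\pi_i(s)$ be the chosen action, with $a_2$ the runner-up, so that $\operatorname{margin}(g^\pi, s) = g^\pi_{a_1}(s) - g^\pi_{a_2}(s) =: m$. By definition of the robust radius in Eq.~\eqref{eq:robust_radius_def}, it suffices to show that any $s'$ with $\|s'-s\|_\infty < m/2$ still satisfies $\pi(s') = \pi(s)$, i.e. $a_1$ remains the unique maximizer of $g^\pi(s')$.

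The key step is a coordinate-wise bound: since $g^\pi$ is $l_\infty$ $1$-Lipschitz, we have $|g^\pi_i(s') - g^\pi_i(s)| \le \|g^\pi(s') - g^\pi(s)\|_\infty \le \|s'-s\|_\infty < m/2$ for every action index $i$. Applying this to $i = a_1$ and to any $i \neq a_1$ gives
\begin{equation*}
g^\pi_{a_1}(s') - g^\pi_i(s') > \big(g^\pi_{a_1}(s) - \tfrac{m}{2}\big) - \big(g^\pi_i(s) + \tfrac{m}{2}\big) = g^\pi_{a_1}(s) - g^\pi_i(s) - m \ge 0,
\end{equation*}
where the last inequality uses $g^\pi_{a_1}(s) - g^\pi_i(s) \ge g^\pi_{a_1}(s) - g^\pi_{a_2}(s) = m$ by the choice of $a_2$ as the second-largest score. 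Hence $a_1$ is still strictly the largest component of $g^\pi(s')$, so $\pi(s') = \pi(s)$. Taking the infimum over all $s'$ with $\pi(s') \neq \pi(s)$ then forces $\mathcal{R}(\pi,s) \ge m/2$, which is the claim. A small care point is whether the original $\arg\max$ is unique: if $m = 0$ the bound $\mathcal{R}(\pi,s) \ge 0$ is vacuous and holds trivially, and if $m > 0$ the maximizer is automatically unique, so the indicator-function definition of $\pi$ in Eq.~\eqref{eq:pi_def} is unambiguous on the ball.

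I do not expect a serious obstacle here — the argument is a direct "margin divided by Lipschitz constant" certificate, standard in the certified-robustness literature. The only thing to be slightly careful about is the factor $1/2$: it arises because both the chosen logit can decrease and a competing logit can increase by up to $\|s'-s\|_\infty$, so the margin erodes at rate $2\|s'-s\|_\infty$. I would also note explicitly that strict versus non-strict inequalities in the definition of $\mathcal{R}$ (infimum over $s'$ with $\pi(s')\neq\pi(s)$) cause no issue, since we have shown the open ball of radius $m/2$ contains no decision-changing point, which immediately yields $\mathcal{R}(\pi,s)\ge m/2$.
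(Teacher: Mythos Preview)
Your proposal is correct and follows essentially the same argument as the paper's proof: both use the coordinate-wise bound $|g^\pi_i(s')-g^\pi_i(s)|\le\|s'-s\|_\infty$ from the $l_\infty$ $1$-Lipschitz property to show that the top score can drop and any competing score can rise by at most $\tfrac{1}{2}\operatorname{margin}(g^\pi,s)$, so the $\arg\max$ is preserved on the ball. Your discussion of the $m=0$ edge case and the strict/non-strict infimum is a welcome addition that the paper's version leaves implicit.
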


\begin{proof}
The proof of this theorem is simple and similar to that in the classification tasks, such as~\cite[Appendix B.1]{zhang2022rethinking} and~\cite[Appendix P]{li2019preventing}.
Given $\forall s\in\mathcal{S}$, we define $\hat{s}$ that satisfies $\| \hat{s}-s \|_\infty \leq \frac{1}{2}\operatorname{margin}(g^\pi,s)$.
For readability, we denote $a^*$ as the optimal action at state $s$, i.e. $a^* \coloneqq \arg\max_{a_i} g^\pi_i(s)$.
Then we have:
\begin{equation}
\label{eq_pf:score_greater_than}
\begin{aligned}
    g^\pi_{a^*}(\hat{s}) &\geq g^\pi_{a^*}(s) - \| \hat{s} - s \|_\infty \\
    & \geq g^\pi_{a^*}(s) - \frac{1}{2}\operatorname{margin}(g^\pi,s).\\
\end{aligned}
\end{equation}
Similarly, we have:
\begin{equation}
\label{eq_pf:score_less_than}
\begin{aligned}
    \max_{j\neq a^*} g^\pi_{j}(\hat{s}) &\leq \max_{j\neq a^*}\left\{ g^\pi_{j}(s) + \| \hat{s} - s \|_\infty \right\} \\
    & \leq \max_{j\neq a^*} g^\pi_{j}(s) + \frac{1}{2}\operatorname{margin}(g^\pi,s).\\
\end{aligned}
\end{equation}
Based on the above Eq.~\eqref{eq_pf:score_greater_than} and Eq.~\eqref{eq_pf:score_less_than}, we can obtain that:
\begin{equation}
\label{eq_pf:score_a_bigger_than_others}
\begin{aligned}
    g^\pi_{a^*}(\hat{s}) - \max_{j\neq a^*} g^\pi_{j}(\hat{s}) & \geq
    \left\{ g^\pi_{a^*}(s) - \frac{1}{2}\operatorname{margin}(g^\pi,s) \right\} - 
    \left\{ \max_{j\neq a^*} g^\pi_{j}(s) + \frac{1}{2}\operatorname{margin}(g^\pi,s) \right\}  \\
    &= g^\pi_{a^*}(s) - \max_{j\neq a^*} g^\pi_{j}(s) - \operatorname{margin}(g^\pi,s) \\
    &= 0. \\
\end{aligned}
\end{equation}
The Eq.~\eqref{eq_pf:score_a_bigger_than_others} indicates that: $\forall s\in\mathcal{S}, \| \hat{s} - s \|_\infty \leq \frac{1}{2}\operatorname{margin}(g^\pi, s)$, we have $\arg\max_{a_i} g^\pi_i(s) = \arg\max_{a_i} g^\pi_i(\hat{s})$.
In other words, the policy $\pi$ makes same decisions at state $s$ and $\hat{s}$.
Therefore, we have $\mathcal{R}(\pi, s)\geq \frac{1}{2}\operatorname{margin}(g^\pi, s) , \, \forall s\in \mathcal{S}$.
\end{proof}

\subsection{Proof of the Policy Bias}
\label{sec:pf_policy_bias}
\begin{statement}
Given a SortNet layer initialized with standard Gaussian distribution, input $\boldsymbol{x}^{(0)} = \boldsymbol{0}$, the output is greater than $0$:
\begin{equation*}
    \mathbb{E}_{\boldsymbol{b}^{(1, k)}\sim \mathcal{N}(\boldsymbol{0}, \boldsymbol{I})} [x^{(1)}_k] - x^{(0)}_k > 0, \, \forall 1\leq k \leq d_1, \, \rho \in [0,1),
\end{equation*}    
where $\boldsymbol{x}^{(l)}_{k}$ denotes the $k$-th unit in the $l$-th layer.
\end{statement}

\begin{proof}
In this proof, we utilize $\boldsymbol{b}$ and $\boldsymbol{w}$ to denote $\boldsymbol{b}^{(1, k)}$ and $\boldsymbol{w}^{(1, k)}$ correspondly for readability.
We can obtain that
\begin{equation*}
\begin{aligned}
    \mathbb{E}_{\boldsymbol{b}\sim \mathcal{N}(\boldsymbol{0}, \boldsymbol{I})} [x^{(1)}_k] - x^{(0)}_k 
    &= \mathbb{E}_{\boldsymbol{b}\sim \mathcal{N}(\boldsymbol{0}, \boldsymbol{I})} \left[ \boldsymbol{\omega}^{\mathrm{T}} \operatorname{sort}\left(\left|\boldsymbol{b}\right|\right) \right]\\
    &=  \mathbb{E}_{\boldsymbol{b}\sim \mathcal{N}(\boldsymbol{0}, \boldsymbol{I})} \left[ \sum_{i=1}^{d} (1-\rho)\rho^{i-1} |\boldsymbol{b}|_{[i]} \right]\\
    &\geq (1-\rho) \, \mathbb{E}_{\boldsymbol{b}\sim \mathcal{N}(\boldsymbol{0}, \boldsymbol{I})} \Big[ \max_i |b_i| \Big]\\
    &> 0 \\
\end{aligned}
\end{equation*}
Thus, the output of the network is biased.
\end{proof}

\section{Implementation Details of SortRL}

\subsection{Training Framework}
The pseudocode of the training framework is shown in Algo.~\ref{alg:pd_sortrl}.

\begin{algorithm}[htbp]
\caption{Training framework for \emph{SortRL}}
\label{alg:pd_sortrl}
\textbf{Input}: MDP $\mathcal{M}$ for the given task\\
\textbf{Parameter}: Hyperparameters $\theta\in\mathbb{R}^+$, $\lambda\in\mathbb{R}^+$. Number of iterations $N_{iter}$\\
\textbf{Output}: SortRL policy $\pi_S$.
\begin{algorithmic}[1] 
\STATE Initialize a typical DNN-based policy $\pi_T$ and a sortRL policy $\pi_S$ for the given task.
\STATE Train  $\pi_T$ in $\mathcal{M}$ with typical DRL algorithms, i.e. $\pi_T\leftarrow \arg\max _{\pi} \mathbb{E}_{s\sim\rho}[V_{\pi}(s)]$.
\STATE Construct an expert dataset $\mathcal{D}=\{s,a^*\}$ through performing $\pi_T$ in $\mathcal{M}$.
\FOR{$i \gets 1$ to $N_{iter}$}
\FOR{each mini-batch of $\mathcal{D}$}
\STATE Calculate the loss: $\mathcal{L}_{\pi_S} = \, \lambda \, \mathbb{E}_{(s,a^*)\sim \mathcal{D}}\big[\mathcal{L}_{\operatorname{CE}}\left(g^\pi(s), a^*\right)\big]
+ \mathbb{E}_{(s,a^*)\sim \mathcal{D}}\big[ \mathcal{L}_{\operatorname{Rob}}\left(g^\pi(s), \theta, a^*\right)\big]$ \\
\STATE Update $\pi_S$ with $\mathcal{L}_{\pi_S}$
\ENDFOR
\ENDFOR
\STATE \textbf{return} SortRL policy $\pi_S$.
\end{algorithmic}
\end{algorithm}

\subsection{Details of the Policy Network}
\subsubsection{Efficient Implementation}
Recall that the $l$-layer of the SortRL Policy is formulated as follows: 
\begin{equation*}
\begin{aligned}
    x_k^{(l)}=\left(\boldsymbol{w}^{(l, k)}\right)^{\mathrm{T}} \operatorname{sort}\left(\left|\boldsymbol{x}^{(l-1)}+\boldsymbol{b}^{(l, k)}\right|\right), \;
     \omega^{(l,k)}_i = (1-\rho)\rho^{i-1}, \, 1\leq l\leq M, \, 1\leq k\leq d_l, \\
\end{aligned}
\end{equation*}
where $d_l$ is the size of $l$-th network layer, $\rho \in [0,1)$ is a hyper-parameter.
$\operatorname{sort}(\boldsymbol{x}) \coloneqq \left[x_{[1]}, \cdots, x_{[d]}\right]^{\mathrm{T}}$ denotes sort operations, where $x_{[k]}$ is the $k$-th largest element of $\boldsymbol{x}\in \mathbb{R}^d$.
However, the SortRL policy is computationally inefficient for the high computational cost of $\operatorname{sort}$ operations.
To address this issue, the computation of $\operatorname{sort}$ operation needs to be paralleled. 
We utilize one possible scheme proposed in \cite{zhang2022rethinking}, which is described briefly as follows.
\begin{proposition*}
    For any $\rho \in [0,1)$, $\boldsymbol{w} \in \mathbb{R}^d$ is a vector where $w_i=(1-\rho) \rho^{i-1}, 1\leq i \leq d$.  For $\forall \boldsymbol{x} \in \mathbb{R}_{+}^d$, we have: 
    \begin{equation}
    \label{Eq:estimation}
        \boldsymbol{w}^{\mathrm{T}} \operatorname{sort}(\boldsymbol{x})=\mathbb{E}_{\boldsymbol{s} \sim \operatorname{Ber}(1-\rho)}\left[\max _i s_i x_i\right]
         = \mathbb{E}_{\boldsymbol{s} \sim \operatorname{Ber}(1-\rho)} \left[\lim _{p \rightarrow+\infty}\Big(\sum_i (s_i x_i)^p\Big)^{1 / p}\right]
    \end{equation}
    where $\boldsymbol{s}$ is a random variable following independent Bernoulli distribution with probability $1 - \rho$ being 1 and $\rho$ being 0.
\end{proposition*}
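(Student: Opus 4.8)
The plan is to establish the two asserted equalities separately: first the elementary $\ell_p$-to-$\ell_\infty$ limit on the right, and then the combinatorial identity that expresses the geometrically weighted sum of order statistics as a Bernoulli expectation of a coordinatewise maximum.

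For the rightmost equality, I would fix an arbitrary realization $\boldsymbol{s}\in\{0,1\}^d$ and put $y_i = s_i x_i \geq 0$. It is standard that for any $\boldsymbol{y}\in\mathbb{R}_+^d$ one has $\max_i y_i \leq \big(\sum_i y_i^p\big)^{1/p} \leq d^{1/p}\max_i y_i$, hence $\lim_{p\to+\infty}\big(\sum_i y_i^p\big)^{1/p} = \max_i y_i$ (the case $\boldsymbol{y}=\boldsymbol{0}$, which occurs precisely when all $s_i=0$, being trivial since both sides are $0$). Since this holds for every fixed $\boldsymbol{s}$, the random variable $\lim_{p\to+\infty}\big(\sum_i (s_i x_i)^p\big)^{1/p}$ is identically equal to $\max_i s_i x_i$, so taking expectations gives the rightmost equality with no exchange of limit and expectation needed.

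For the left equality I would use that $s_1,\dots,s_d$ are i.i.d., so both sides are invariant under permuting the entries of $\boldsymbol{x}$ (the left side because $\operatorname{sort}$ already symmetrizes, the right side because relabeling the $x_i$ amounts to relabeling the i.i.d.\ $s_i$). Hence I may assume $x_1 \geq x_2 \geq \cdots \geq x_d \geq 0$, i.e.\ $x_i = x_{[i]}$. Let $J = \min\{i : s_i = 1\}$, with the convention $\max_i s_i x_i = 0$ on the event that all $s_i = 0$. Because the coordinates are in decreasing order, on $\{J \leq d\}$ we have $\max_i s_i x_i = x_J$, while $\Pr(J = j) = \rho^{j-1}(1-\rho)$ for $1\leq j\leq d$ and $\Pr(\text{all }s_i=0)=\rho^d$. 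Consequently
\begin{equation*}
\mathbb{E}_{\boldsymbol{s}\sim\operatorname{Ber}(1-\rho)}\big[\max_i s_i x_i\big] = \sum_{j=1}^d (1-\rho)\rho^{j-1} x_j + \rho^d\cdot 0 = \sum_{j=1}^d (1-\rho)\rho^{j-1} x_{[j]} = \boldsymbol{w}^{\mathrm{T}}\operatorname{sort}(\boldsymbol{x}),
\end{equation*}
which is exactly the claimed identity.

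I do not expect a genuine obstacle; the argument is short once the ``first success index'' idea is in place. The points that need a little care are: ties among the $x_i$ (harmless, since $\max_i s_i x_i = x_J = x_{[J]}$ still holds when coordinates coincide); the degenerate endpoint $\rho=0$, where $\operatorname{Ber}(1)$ forces every $s_i=1$ and both sides collapse to $\max_i x_i$; and the all-zero event, whose contribution $0$ is consistent with the geometric weights because $\sum_{j=1}^d (1-\rho)\rho^{j-1} = 1-\rho^d = \Pr(J\leq d)$. I would also note explicitly that the hypothesis $\boldsymbol{x}\in\mathbb{R}_+^d$ is what makes both the $\ell_p$ limit and the ``first success'' computation valid, and that this is precisely the regime in which the proposition is applied, since the SortNet layer feeds the nonnegative vector $\big|\boldsymbol{x}^{(l-1)}+\boldsymbol{b}^{(l,k)}\big|$ into $\operatorname{sort}$.
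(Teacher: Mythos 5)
Your proof is correct. Note that the paper itself does not prove this proposition at all: it is imported verbatim from the SortNet paper (\cite{zhang2022rethinking}) in Appendix~B.2 as part of the efficient-implementation discussion, so there is no in-paper argument to compare against. Your two-step argument is the standard one and is complete: the rightmost equality is the pointwise $\ell_p\to\ell_\infty$ limit for each fixed realization of $\boldsymbol{s}$ (so no limit/expectation interchange is needed, as you correctly observe), and the left equality follows from the ``first retained index'' computation, where $\Pr(J=j)=(1-\rho)\rho^{j-1}$ reproduces exactly the geometric weights $w_j$ applied to the order statistics. Your handling of the edge cases (ties, the all-zero event contributing $0$ with residual probability $\rho^d$, the endpoint $\rho=0$) is careful and correct, and your closing remark that nonnegativity of $\boldsymbol{x}$ is essential --- and is guaranteed in the SortNet layer by the absolute value $\bigl|\boldsymbol{x}^{(l-1)}+\boldsymbol{b}^{(l,k)}\bigr|$ --- is exactly the right observation about why the hypothesis matches the use case.
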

According to this proposition, we obtain an unbiased estimation of $\boldsymbol{w}^{\mathrm{T}} \operatorname{sort}(\boldsymbol{x})$ based on sampling on the given Bernoulli distribution, which can be computed parallelly and efficiently.
Besides, Zhang et al.~\cite{zhang2022rethinking} points out that $\max _i s_i x_i$  in Eq.~\eqref{Eq:estimation} makes Gradient-Decent based optimization of the network challenging.
To alleviate this issue, the hyper-parameter $p$ is utilized to give a smooth approximation of the maximum operation.
In this work, during the training process, $p$ gradually increases from a small value $8$ until reaching a large value $10^3$.

\subsubsection{Normalization}

As described in Sec.~\ref{sec:training_framework}, the output of each layer in $g^\pi$ is biased (always being non-negative) under random initialization, and thus cannot be trained with typical DRL algorithms directly.
As illustrated in Sec.~\ref{sec:pf_policy_bias}, we give a brief proof of the bias of the policy initialized with standard Gaussian Distribution with zero input.
The bias of each layer is then fed into subsequent layers and thus can be accumulated through forward propagation.
This makes the outputs in upper layers linearly increase, leading to unstable or ineffective outputs of the network.

In order to address this issue, inspired by $l_\infty$-dist net~\cite{zhang2021towards}, normalization is utilized after each intermediate layer to control the scale of the output, i.e.
$\boldsymbol{x}^{(l)} \leftarrow \boldsymbol{x}^{(l)} - \mathbb{E}\left[\boldsymbol{x}^{(l)}\right]$, where $\mathbb{E}\left[\boldsymbol{x}^{(l)}\right]$ is the mean value of the $l$-th layer output.
The estimation of $\mathbb{E}\left[\boldsymbol{x}^{(l)}\right]$ is obtained based on the mini-batch during training, while the moving average value is utilized during evaluation.
Note that this operation is quite similar to Batch Normalization~\cite{ioffe2015batch} without scaling operation because the scaling operation may change the Lipschitz constant.

\subsubsection{Scaled cross-entropy loss}
As shown in Eq.~\eqref{eq:pi_loss}, the student policy $\pi_S$ is trained to mimic the teacher policy with the Cross-Entropy loss, i.e. $\mathcal{L}_{\operatorname{CE}}\left(g^\pi(s), a^*\right)$.
The  Cross-Entropy loss is invariant to the shift operation but not scaling, such as multiplying a constant to the network outputs.
However, the SortRL policy is $1$-Lipschitz continuous, thus cannot adjust the scale of the output to match the Cross-Entropy loss.
Therefore, we utilize $\mathcal{L}_{\operatorname{CE}}\left(\mu\cdot g^\pi(s), a^*\right)$, where $\mu$ is a learnable scalar~\cite{zhang2021towards}.
Note that $\mu$ has no influence on the agent's decision.
In this work, $\mu$ is initialized as $1.0$ and adjusted by the optimizer automatically during training.

\section{Experiment Details of Classic Control}

\subsection{Environment Settings}

\subsubsection{Environments and Baseline Methods}
We utilize the gym environments and PPO experts proposed in Stable Baselines 3~\cite{stable-baselines3}, which is a reliable implementation of mainstream DRL algorithms.
In order to  evaluate the policy vulnerability to perturbations at each observation dimension, the observation normalization provided by PPO experts is utilized to  eliminate the dimensional influence.

\subsubsection{Adversary Settings}
In this experiment, we utilize the Projected Gradient Decent (PGD) attack~\cite{madry2018towards} as the adversary $\nu$, which is formulated as follows.
For a given state $s$, the untargeted PGD attacks attempt to change the policy decision with $K$ iterations updates:
\begin{equation}
\label{eq_detail:control_0}
    s^{k+1} = \operatorname{CLIP}_{s,\epsilon}\left( s^k + \eta\cdot \operatorname{sign}\left( \nabla_{s^k} \mathcal{L}_{\operatorname{CE}}(\pi(s^k), a^*) \right) \right), 
    \quad s^0 = s, \, 0<k<K,
\end{equation}
where $\mathcal{L}_{\operatorname{CE}}$ denotes the cross-entropy loss, $\eta$ denotes step size, $K$ denotes step counts, and $a^*=\arg\max_a \pi(s^k)$ denotes the optimal action given by the agent.
$\operatorname{CLIP}_{s,\epsilon}$ is utilized to guarantee that $\|s^k-s\|_{\infty} \leq \epsilon$.
Note that $g^\pi$ is utilized instead of $\pi$ in Eq.~\eqref{eq_detail:control_0} because $\pi(s)$ is not differentiable.
In this experiment, the step size $\eta = \epsilon/10$, and $K=10$.

\subsection{Training Details}
All the SortRL policies are trained with AdamW~\cite{loshchilov2018decoupled} optimizer with learning rate $\alpha=0.02$, weight decay $0.02$, and batch size $512$.
PPO experts trained by Stable Baselines 3 are utilized as teacher policies.
The teacher dataset is composed of $50K$ states and corresponding expert actions.
Each student policy network is composed of $5$ layers with width=$640$.
$\rho=0.3$.
During training, each $\pi_S$ is trained with $2K$ iterations, while most policies converge at $1K$ iterations.
The $\theta$ is set $2\epsilon$, where $\epsilon$ is the max strength of perturbations to resist.
In this work, all SortRL policies are implemented based on Pytorch.
The policies are trained on an NVIDIA RTX 3090 GPU, Ubuntu 22.04, and the CUDA version is 11.8

\section{Experiment Details of Video Games}

\subsection{Environment Settings}
\subsubsection{Environments}
In this experiment, all the methods are evaluated in the same environments with the same gym wrappers.
Two categories of games are chosen, including four Atari games and two ProcGen games.
At each step of Atari games, the agent obtains an image observation composed of $84\times 84\times 1$ grey-scaled pixels with no frame-stacking.
All rewards are clipped between $[-1,1]$.
Each pixel is scaled to be between $0\sim1$. 

The observation in ProcGen is an image with $64\times 64\times 3$ RGB pixels.
Each pixel value is scaled to be between $0\sim1$. 
Each time, the RL policy is trained on a finite set of levels (train mode)
and tested on the full distribution of levels (test mode), which is designed to test the generalization ability of the policies.

\subsubsection{Baseline Methods}
In this section, we mainly introduce the Bootstrapped Opportunistic Adversarial Curriculum Learning (BCL) method, which is a novel flexible adversarial curriculum learning framework and enhances the robustness of existing robust RL methods.
BCL can be combined with various methods.
In this work, \emph{BCL-RADIAL} means improving the  robustness of RADIAL with BCL framework under strong adversaries.
\emph{BCL-RADIAL+AT} means run BCL-RADIAL until it reaches a point in the curriculum at which its performance degrades significantly, then switch to Adversarial Training (AT) for the remainder of the curriculum.
\emph{BCL-MOS} means the combination of BCL with Maximum Opportunistic Skipping (MOS).
In detail, we always choose to skip to the smallest against which the current model is not (yet) robust, the most opportunistic version of the algorithm is obtained.

\subsubsection{Adversary Settings}
In Sec. 4.2, we utilize 10 steps PGD attack with step size $\epsilon/10$ on Atari and ProcGen, which is the same as the experiments on the classic control.
In Sec. 4.3, i.e. experiments on video games with stronger adversaries, we utilize the same adversary setting as that in the BCL work~\cite{wu2022robust}.
In detail, we utilize 30 steps PGD, FGSM, RI-FGSM-Multi, and RI-FGSM-Multi-T on Atari tasks, while only utilizing 30 steps PGD on ProcGen.
The step size of the FGSM-based adversary is $0.375$.

\subsection{Action Certification Rate}
In this work, we utilize Action Certification Rate (ACR)~\cite{zhang2020robust} to evaluate the certified robustness of our method.
ACR is defined as the proportion of the actions during rollout that are guaranteed unchanged with any adversary $\nu\in\mathcal{B}_{\epsilon}^{\infty}$, which is calculated as follows.

Given the policy $\pi$ for SA-MDP $\widetilde{\mathcal{M}}$, we perform $\pi$ in the corresponding typical MDP $\mathcal{M}$ and collect $N$ states $\{s_i\}$.
As described in Theorem 2, $\forall s\in\mathcal{S}$, if $\operatorname{margin}(g^\pi, s) \geq 2\epsilon$, we can obtain that $\pi(s)=\pi(\hat{s}), \, \forall \hat{s}\sim \nu(s)$, i.e. the SortRL $\pi$ can resist attacks from any adversary $\nu\in \mathcal{B}_{\epsilon}^{\infty}(s)$.
Thus, we can obtain that:
\begin{equation*}
    ACR = \frac{\left| \left\{s_i|\operatorname{margin}(g^\pi, s_i)\geq 2\epsilon\right\} \right|}{| \{s_i\} |}
\end{equation*}

\subsection{Additional Experiment Results}

The experiment results on video games with stronger adversaries are given in Table
~\ref{tab:atari_res_big_eps} (Atari) and Table~\ref{tab:procgen_res_big_eps} (ProcGen).
As shown in the tables, our method achieves state-of-the-art performance compared to existing methods, including BCL-based methods.
Especially in tasks with $\epsilon>15/255$, SortRL outperforms baseline methods with considerable advantages.
Take \emph{RoadRunner}  with $\epsilon={20}/{255}$ as an instance, SortRL achieves an episode reward of $32433$  and outperforms existing state-of-the-art BCL-RADIAL-AT ($25325$) by approximately $28\%$.
Besides, we can also observe significant improvement of BCL-based methods, especially in tasks with $5/255\leq\epsilon\leq 10/255$, which demonstrates the effectiveness of curriculum learning in robust RL.
In the future, the combination of SortRL and BCL can be further studied.

\begin{table*}[htbp]
\centering
\begin{tabular}{c|l|lll}
\hline
\multicolumn{1}{l|}{\textbf{Task}} & {\textbf{Model/Metric}} & \multicolumn{3}{c}{\textbf{Episode Reward}} \\ \hline
\multicolumn{1}{l|}{} & {$\epsilon$} & $10/255$ & $15/255$ & $20/255$ \\ \hline
\multirow{6}{*}{Freeway} 
& {DQN} & $0.0\pm0.0$ & $0.0\pm0.0$ & $0.0\pm0.0$ \\
& {SA-DQN} & $19.3\pm0.4$ & $19.3\pm0.3$ & $20.0\pm0.3$ \\
& {RADIAL-DQN} & $17.1\pm0.3$ & $13.4\pm0.2$ & $7.9\pm0.3$ \\
& {BCL-MOS-AT} & $31.1\pm0.3$ & \underline{$25.9\pm0.4$} & $20.8\pm0.3$ \\
& {BCL-RADIAL} & \bm{$33.4\pm0.1$} & $25.9\pm0.6$ & \underline{$21.2\pm0.5$} \\
 \cline{2-5}  
 \rowcolor{Gray}\cellcolor{white} &{SortRL-DQN} & \underline{$33.1\pm0.4$} & \bm{$30.3\pm0.7$} & \bm{$27.2\pm0.8$} \\ \hline
 \hline
 \multicolumn{1}{l|}{} & {$\epsilon$} & $5/255$ & $10/255$ & $15/255$ \\ \hline
\multirow{7}{*}{\begin{tabular}[c]{@{}c@{}}Bank\\ Heist\end{tabular}} 
& {DQN} & $0.0\pm0.0$ & $0.0\pm0.0$ & $0.0\pm0.0$ \\
& {SA-DQN} & $1126.0\pm32.0$ & $63.0\pm3.5$ & $16.0\pm1.6$ \\
& {RADIAL-DQN} & $518.5\pm16.7$ & $0.0\pm0.0$ & $0.0\pm0.0$ \\
& {BCL-MOS-AT} & $1095.5\pm6.2$ & $664.0\pm60.6$ & $586.5\pm105.6$ \\
& {BCL-RADIAL} & \underline{$1225.5\pm4.9$} & \underline{$1223.5\pm4.1$} & $228.5\pm13.9$ \\
& {BCL-RADIAL+AT} & $1093.0\pm5.3$ & $1010.5\pm8.0$ & \underline{$961.5\pm9.2$} \\
 \cline{2-5}
\rowcolor{Gray}\cellcolor{white} & {SortRL-DQN} & \bm{$1299.1\pm7.2$} & \bm{$1265.5\pm9.8$} & \bm{$1193.8\pm15.2$} \\
 \hline
 \hline
 \multicolumn{1}{l|}{} & {$\epsilon$} & $5/255$ & $10/255$ & $15/255$ \\ \hline
\multirow{7}{*}{\begin{tabular}[c]{@{}c@{}}Road\\ Runner\end{tabular}} & {DQN} & $0.0\pm0.0$ & $0.0\pm0.0$ & $0.0\pm0.0$ \\
& {SA-DQN} & $985\pm207$ & $0.0\pm0.0$ & $0.0\pm0.0$ \\
& {RADIAL-DQN} & $7195\pm929$ & $495\pm116$ & $0.0\pm0.0$ \\
& {BCL-MOS-AT} & \underline{$40060\pm1828$} & $15785\pm1124$ & $1195\pm180$ \\
& {BCL-RADIAL} & $37865\pm1082$ & \bm{$37865\pm1082$} & $6350\pm590$ \\
& {BCL-RADIAL+AT} & \bm{$42490\pm1309$} & \underline{$37665\pm1563$} & \underline{$25325\pm1057$} \\
 \cline{2-5}
 \rowcolor{Gray}\cellcolor{white} &{SortRL-DQN}  & $39924\pm1429$ & {$35297\pm1566$} & \bm{$32433\pm1719$}  \\ \hline

\end{tabular}
\caption{The experiment results on the Atari video games.
The best results are \textbf{boldfaced}, while the second best ones are \underline{underlined}.
The \colorbox{Gray}{{gray row}} denotes the most robust method, selected based on the score $\sum_\epsilon R_{\epsilon}$, where $R_{\epsilon}$ is the mean episode reward given perturbation strength $\epsilon$.
}
\label{tab:atari_res_big_eps}
\end{table*}

\begin{table*}[htbp]
\centering
{%
\begin{tabular}{c|l|l|lll}
\hline
\textbf{Task} & \multicolumn{2}{c|}{\textbf{Model/Metric}} & \multicolumn{3}{c}{\textbf{Episode Reward}} \\ \hline
\multicolumn{1}{l|}{} & \multicolumn{1}{l|}{$\epsilon$} & Env. Type & $10/255$ & $20/255$ & $40/255$ \\ \hline
 \multirow{10}{*}{Jumper} 
 & \multicolumn{1}{l|}{\multirow{2}{*}{PPO}} & Train & $3.42\pm0.15$ & $3.61\pm0.15$ & $2.94\pm0.14$ \\
 & \multicolumn{1}{l|}{} & Eval & $2.81\pm0.14$ & $2.62\pm0.14$ & $2.50\pm0.14$ \\ \cline{2-6} 
 & \multicolumn{1}{l|}{\multirow{2}{*}{RADIAL-PPO}} & Train & $5.43\pm0.16$ & $2.45\pm0.14$ & $1.44\pm0.11$ \\
 & \multicolumn{1}{l|}{} & Eval & $3.03\pm0.14$ & $2.04\pm0.13$ & $1.44\pm0.11$ \\ \cline{2-6} 
 & \multicolumn{1}{l|}{\multirow{2}{*}{BCL-MOS(V)-AT}} & Train & $8.15\pm0.12$ & $8.40\pm0.12$ & $7.84\pm0.13$ \\
 & \multicolumn{1}{l|}{} & Eval & $4.64\pm0.16$ & $4.65\pm0.16$ & $4.41\pm0.16$ \\ \cline{2-6} 
 & \multicolumn{1}{l|}{\multirow{2}{*}{BCL-MOS(R)-AT}} & Train & $8.29\pm0.12$ & $8.40\pm0.12$ & $6.93\pm0.15$ \\
 & \multicolumn{1}{l|}{} & Eval & $4.29\pm0.16$ & $4.09\pm0.16$ & $3.85\pm0.15$ \\ \cline{2-6} 
 \rowcolor{Gray}\cellcolor{white} & & Train & \bm{$9.10\pm0.29$} & \bm{$9.10\pm0.29$} & \bm{$9.10\pm0.30$} \\ 
 \rowcolor{Gray}\cellcolor{white} &  {\multirow{-2}{*}{SortRL-PPO (Ours)}} & Eval & \bm{$4.67\pm0.39$} & \bm{$4.70\pm0.39$} & \bm{$4.70\pm0.40$} \\ \hline
\end{tabular}%
}
\caption{The experiment results on the Jumper tasks.}
\label{tab:procgen_res_big_eps}
\end{table*}

\subsection{Training Details}
In this experiment, SortRL policies are trained with learning rate $\alpha=0.02$, weight decay $0.02$, and batch size $512$.
DQN experts are utilized as teacher policies.
The teacher dataset is composed of $100K$ states and corresponding expert actions.
According to ~\cite{zhang2022rethinking}, we construct the student policy network with $6$ layers with width=$1280$, followed by two linear layers with width $256$ to enhance the expressive power of the whole network.
Interval Bound Propagation (IBP) \cite{gowal2018effectiveness} is utilized to ensure the robustness of the linear layers.
$\rho=0.3$.
During training, each $\pi_S$ is trained with $2K$ iterations, while most policies converge at $1K$ iterations.
The $\theta$ is set $2.2\epsilon$, where $\epsilon$ is the max strength of perturbations to resist.
In this work, all SortRL policies are implemented based on Pytorch.
The policies are trained on an NVIDIA RTX 3090 GPU, Ubuntu 22.04, and the CUDA version is 11.8.

\end{document}